\documentclass[letterpaper]{article} 

\newif\ifextendedversion
\extendedversiontrue   

\usepackage{aaai2026}  
\usepackage{times}  
\usepackage{helvet}  
\usepackage{courier}  
\usepackage[hyphens]{url}  
\usepackage{graphicx} 
\usepackage{natbib}  
\usepackage{caption} 
\usepackage{latexsym}
\usepackage{amssymb}
\usepackage{amsmath}
\usepackage{amsthm}
\usepackage{booktabs}
\usepackage{enumitem}
\usepackage{graphicx}
\usepackage{color}
\usepackage{todonotes}
\usepackage{subfig}

\usepackage[ruled,vlined, linesnumbered, noend]{algorithm2e}
\SetInd{0.1em}{0.4em}
\usepackage{etoolbox}

\usepackage{circuitikz}
\usetikzlibrary{patterns} 

\usepackage{hhline}

\usepackage{tikz}
\usetikzlibrary{calc,tikzmark}
\usepackage{float}

\usepackage{soul} 
\usepackage{xcolor} 
\sethlcolor{gray!20}

\usepackage{graphicx,verbatimbox,caption,stackengine}

\newtheorem{theorem}{Theorem}
\newtheorem{lemma}[theorem]{Lemma}

\newtheorem{definition}{Definition}

\setitemize{itemsep=1pt,topsep=1pt,itemindent=10pt,leftmargin=0pt}
\setenumerate{itemsep=1pt,topsep=1pt,itemindent=10pt,leftmargin=0pt}

\newcommand{\BibTeX}{B\kern-.05em{\sc i\kern-.025em b}\kern-.08em\TeX}
\newcommand{\rivals}{\ensuremath{\widetilde{\mathcal{G}}}}

\newcommand{\myparagraph}[1]{\vskip2pt\noindent\textbf{#1.}}

\DeclareMathAlphabet{\mathpzc}{OT1}{pzc}{m}{it}

\SetKwComment{Comment}{$\triangleright$\ }{}
\SetCommentSty{itshape}
\SetKw{Continue}{continue}

\makeatletter
\patchcmd{\algocf@makecaption@ruled}{\hsize}{\textwidth}{}{} 
\patchcmd{\@algocf@start}{-1.5em}{0em}{}{} 
\makeatother

\usepackage{newfloat}
\usepackage{listings}
\DeclareCaptionStyle{ruled}{labelfont=normalfont,labelsep=colon,strut=off} 
\floatstyle{ruled}
\newfloat{listing}{tb}{lst}{}
\floatname{listing}{Listing}
\ifextendedversion
    \title{Interleaving Scheduling and Motion Planning with Incremental Learning of Symbolic Space-Time Motion Abstractions \\ (Extended Version)}
\else
    \title{Interleaving Scheduling and Motion Planning with Incremental Learning of Symbolic Space-Time Motion Abstractions}
\fi

\author {
    Elisa Tosello\textsuperscript{\rm 1}, 
    Arthur Bit-Monnot\textsuperscript{\rm 2}, 
    Davide Lusuardi\textsuperscript{\rm 1}, 
    Alessandro Valentini\textsuperscript{\rm 1},
    Andrea Micheli\textsuperscript{\rm 1}
}
\affiliations {
    \textsuperscript{\rm 1}Fondazione Bruno Kessler, Trento, Italy\\
    \textsuperscript{\rm 2}LAAS-CNRS, Université de Toulouse, CNRS, INSA, Toulouse, France\\
    etosello@fbk.eu, abitmonnot@laas.fr, lusuardi@fbk.eu, alvalentini@fbk.eu, amicheli@fbk.eu
}

\usepackage{bibentry}

\begin{document}

\maketitle

\begin{abstract}
Task and Motion Planning combines high-level task sequencing (\textit{what} to do) with low-level motion planning (\textit{how} to do it) to generate feasible, collision-free execution plans.  
However, in many real-world domains, such as automated warehouses, tasks are predefined, shifting the challenge to \textit{if}, \textit{when}, and \textit{how} to execute them safely and efficiently under resource, time and motion constraints.
In this paper, we formalize this as the Scheduling and Motion Planning problem for multi-object navigation in shared workspaces. 
We propose a novel solution framework that interleaves off-the-shelf schedulers and motion planners in an incremental learning loop. The scheduler generates candidate plans, while the motion planner checks feasibility and returns symbolic feedback, i.e., spatial conflicts and timing adjustments, to guide the scheduler towards motion-feasible solutions.
We validate our proposal on logistics and job-shop scheduling benchmarks augmented with motion tasks, using state-of-the-art schedulers and sampling-based motion planners. Our results show the effectiveness of our framework in generating valid plans under complex temporal and spatial constraints, where synchronized motion is critical.
\end{abstract}

\begin{links}
    \ifextendedversion
        \link{Code}{https://github.com/fbk-pso/tampest.git}
    \else
      \link{Code}{https://github.com/fbk-pso/tampest.git}
      \link{Extended version}{http://arxiv.org/abs/2603.10651}
    \fi
\end{links}

\section{Introduction}
Task and Motion Planning (TAMP) is the problem of combining high-level decision-making, i.e., deciding which tasks to perform, with low-level motion planning, i.e., ensuring that these tasks are carried out via physically feasible, collision-free, trajectories~\cite{Garrett2021, Dantam2020}. This integration is critical in domains where symbolic actions must be grounded in real-world geometry and dynamics, including robotics and automated manufacturing.
While traditional TAMP focuses on \textit{what} to do and \textit{how} to execute it, many real-world scenarios assume a predetermined set of tasks, shifting the challenge to \textit{if} and \textit{when} to perform them. This reframes the problem as scheduling under resource, precedence, and timing constraints. 
For example, in an automated warehouse, mobile robots must transport goods from storage to delivery stations. With tasks such as move, pick, and drop predetermined, the problem becomes (i) deciding the order and timing for each robot (\textit{scheduling}) and (ii) computing dynamically and kinematically feasible, collision-free trajectories to execute these tasks in the continuous physical environment (\textit{motion planning}) (see Figure~\ref{fig:example1}). The interplay between space and time is crucial: motion planning must ensure not only spatial feasibility but also precise temporal coordination among agents, which may need to wait, sequence, or synchronize their movements to safely share constrained regions (e.g., narrow passages) and to prevent conflicts or deadlocks. Unlike discrete path-finding abstractions, this requires reasoning directly in continuous configuration spaces with explicit kinodynamic constraints.
We refer to this integrated challenge as the Scheduling and Motion Planning (SAMP) problem.

\begin{figure}
    \centering
    \includegraphics[width=\linewidth]{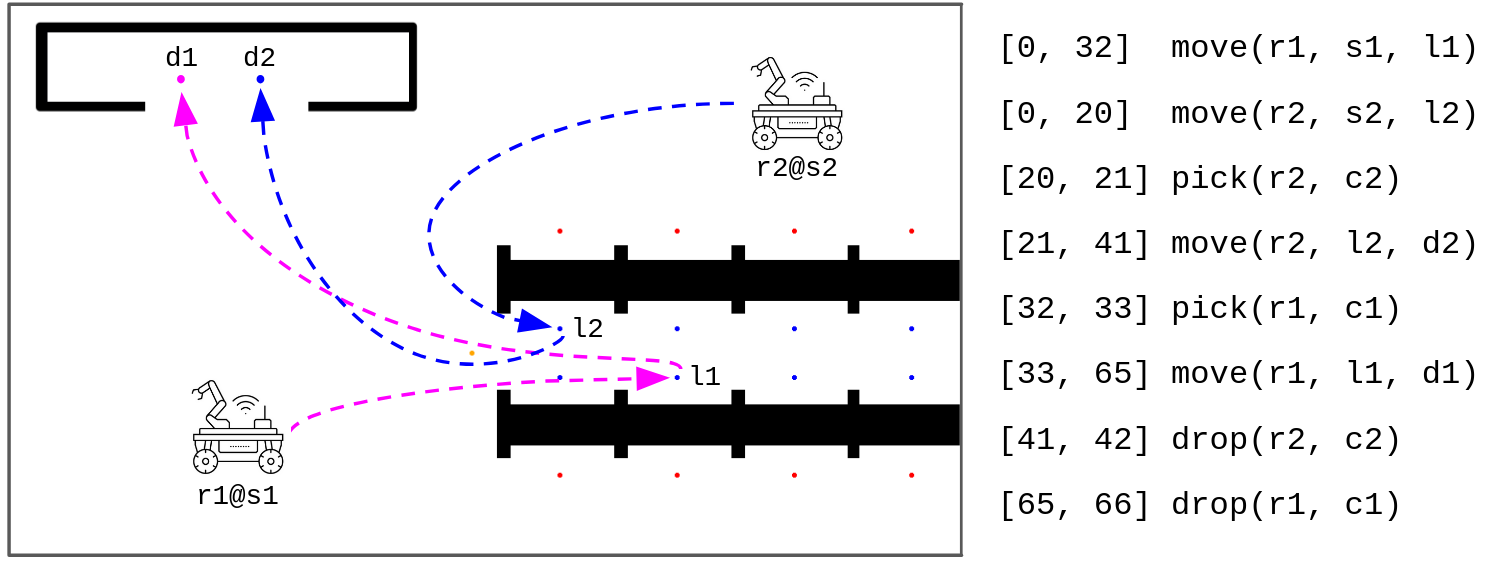}
    \caption{SAMP schedule of robots ($r_1$, $r_2$) executing move–pick–drop tasks. Intervals [$t_i$, $t_j$] denote start/end times. Robots move from ($s_1$, $s_2$) to pick ($c_1$, $c_2$) at ($l_1$, $l_2$) and deliver them to ($d_1$, $d_2$), parallelizing when possible.}
    \label{fig:example1}
\end{figure}

In this paper, we formally define the SAMP problem for multiple objects navigating in a shared workspace and propose a framework that addresses SAMP by interleaving  off-the-shelf schedulers and motion planners in an incremental learning loop of symbolic motion abstractions. The scheduler generates candidate schedules without considering the underlying motion. The motion planner, treated as a black-box, evaluates them accounting for the kinematics and dynamics of the objects involved, and returns either feasible trajectories or symbolic refinements to help the Scheduler finding a valid solution. Feedbacks include geometric refinements, highlighting spatial conflicts (i.e., unreachable goals and blocking obstacles), and temporal refinements, adjusting activity durations or requesting delays to enable feasible motion synchronization.
By incrementally learning such symbolic motion abstractions, our framework does not need to fully ground all constraints in advance, enabling better scalability in complex and dynamic domains.

We provide constraint formulations either via fluents (conditions and effects) or through precedence and resource constraints, enabling different schedulers (e.g., Aries~\cite{bitmonnot:hal-04174800} and OR-Tools~\cite{cpsatlp}) to be combined with (instrumented) motion planners (ST-RRT*~\cite{22-grothe-ICRA}) under optimal/non-optimal, with/without fluents settings. We evaluate these combinations on classical logistics and job-shop benchmarks~\cite{repec:eee:ejores:v:64:y:1993:i:2:p:278-285} augmented with navigation tasks. Results show that our framework produces valid, eventually optimal, synchronized plans under temporal and spatial constraints.

\section{Related Work}
Several studies address TAMP: from domain-specific solutions ~\cite{FFRob, Toussaint} to general frameworks ~\cite{dantam2016incremental, PDDLStream, Cashmore-2015, tosello2024}. While effective in combining symbolic and geometric planning, they neglect the temporal dimension, crucial when dealing with multi-agent scenarios. This gap led to works on temporal coordination, from motion-control strategies~\cite{Pecora_Andreasson_Mansouri_Petkov_2018} to optimal multi-agent planning~\cite{10102342} and Temporal Task and Motion Planning~\cite{2025-aaai}. However, they overlook cases where activities are predefined and the problem shifts toward scheduling, focusing on the temporal allocation and synchronization of known tasks rather than their dynamic generation.

This motivates Simultaneous Task and Motion Scheduling (STAAMS), which assigns and orders high-level actions while accounting for motion-level constraints. Although STAAMS combines Constraint Programming and Motion Planning, most existing approaches are tailored to specific domains, e.g., dual-arm manipulation~\cite{robotics10020057}, traffic coordination~\cite{10341755}, and assembly lines~\cite{10008036}. Some methods either rely on precomputed trajectories or perform online motion planning but use the scheduler primarily to enforce static constraints (e.g., collision avoidance or kinematic limits), often without temporal feedback~\cite{9197103} or tight scheduling-motion integration.
Such approaches could serve as a reasonable baseline for our method, but high customization and limited reproducibility hinder direct comparison.
We therefore start from benchmarks that may seem simpler (e.g., 2D navigation) but are generic and, importantly, require tightly interleaving scheduling with motion-level feasibility over time. 

To overcome the high computational cost of reasoning over continuous state spaces with explicit modeling of agent kinematics and dynamics, one may shift to Multi-Agent Path Finding (MAPF)~\cite{mapf-def}. However, classical MAPF assumes point-like agents on discrete spaces, neglecting geometry, kinematics, and dynamics. Extensions for large agents~\cite{10.1609/aaai.v33i01.33017627, 10.1609/aaai.v37i13.26812},  kinematic constraints~\cite{10.5555/3171837.3171975, 10.1609/aaai.v33i01.33017651} or temporal dependencies~\cite{10.1609/aaai.v39i22.34487} partially address these limitations, but rely on discretization, while continuous-time MAPF~\cite{ijcai2019p6} still omits full kinodynamic modeling.
In this paper, we remain focused on the continuous state space with kinodynamic constraints, leaving the combination with MAPF in discrete space as a new problem to be explored in the future.

\section{Problem Statement}
Consider a fleet of mobile robots moving products from shelves to a delivery station (see Figure~\ref{fig:example1}). Robots are movable objects whose configurations change during tasks, and the schedule defines when they move, pick items, or return, along with the trajectories and control laws enabling these actions. 
Activities can be optional, allowing the scheduler to skip them if they are desirable but non-essential (their inclusion improves plan quality) or to select among mutually exclusive alternatives (e.g., a delivery may be made to one of several possible locations).
Motions must be geometrically feasible, avoiding static (walls, shelves) and dynamic (other robots) obstacles, and temporally feasible, satisfying the timing scheduling constraints.
We call this problem SAMP and formalize it here, starting from the concept of Optional Scheduling (OS).

\begin{definition}\label{def:scheduling-with-fluents-and-effects}
An \textbf{Optional Scheduling} (OS) problem (with fluents and effects) is a tuple $\phi = \langle\mathcal{V}, \mathcal{A}, \mathcal{R}, \mathcal{C}, \textit{eff},  \textit{init}\rangle$: 
\begin{itemize}
    \item $\mathpzc{V}$ = $\{f_1, .., f_k\}$ is a finite set of fluents $f \in \mathpzc{V}$, each with a finite domain \textit{Dom(f)}.
    
    \item $\mathcal{A}$ is the set of mandatory and optional activities, where optional ones may be excluded. Each $a \in \mathcal{A}$ has a duration bound $[lb_a, ub_a]$, with $lb_a, ub_a \in \mathbb{N}^+$ being the lower and upper bounds, respectively. We denote a.present, a.start, and a.end the variables for presence, start, and end times of a.
     
    \item $\mathcal{R}$ is the set of available resources, with each resource $r \in  \mathcal{R}$ having availability $\lambda_r \in \mathbb{N}^+$.  

    \item $\mathcal{C} = \mathcal{C}_f \sqcup \mathcal{C}_t \sqcup \mathcal{C}_r$ is the set of constraints, divided into:
    \begin{itemize}
        \item $\mathcal{C}_f$ the set of fluent constraints of the form $([\kappa_1, \kappa_2], a, f = v)$, where $\kappa_i$ is an expression $a.start + k$ or $a.end-k$, $k \in \mathbb{N}$, with $a \in \mathcal{A}$, $f \in \mathcal{V}$, $v \in \textit{Dom(f)}$.
        
        \item $\mathcal{C}_t$ the set of constraints enforcing precedence relations and temporal ordering between activities. They are arbitrary Boolean combination of atoms of the form:
             \begin{itemize}
                 \item a.present for some $a \in \mathcal{A}$, where \textit{a}.present is True iff the activity is scheduled (i.e. appears in the solution);
                
                 \item $\kappa_1 - \kappa_2 \leq \Delta t$, with $\kappa_i \in \{\textit{a}_j.\text{start}, \textit{a}_j.\text{end}\}$ for  $a_j \in \mathcal{A} $, and $\Delta t \in \mathbb{Z}$ being the maximum delay between them. 
             \end{itemize}
        \item $\mathcal{C}_r$ the set of constraints on resource usage, where activity $a \in \mathcal{A}$ uses $\gamma_r^a$ units of resource $r$ over $[a.start, a.end]$.
        
         \end{itemize}

    \item $\textit{eff}: \mathcal{A} \rightarrow \mathcal{E}$ maps an activity to its timed effects on fluents. Each element of $\textit{eff}(a)$ is of the form $(\kappa, f := v)$ with $\kappa$ being either $a.start + k$ or $a.end-k$ with $k \in \mathbb{N}$, $f \in \mathcal{V}$ and $v \in \textit{Dom(f)}$; it indicates that at timing $\kappa$ (relative to $a$), fluent $f$ is assigned to value $v$ due to activity $a$.
    
    \item $\textit{init}$ is the initial fluent state, which assigns a value $\textit{init}(f) \in Dom(f)$ to each $f \in V$ at time 0.
\end{itemize}
\end{definition}

\noindent
The schedule solving an OS problem is defined as follows.   

\begin{definition}\label{def:schedule}  
A \textbf{schedule} $\rho$ solving $\phi$ is a tuple $\langle p, s, e\rangle$:
\begin{itemize}
    \item $p : \mathcal{A} \rightarrow \{\top, \bot\}$ indicates if an activity is present,
    \item $s : \mathcal{A} \rightarrow \mathbb{N}$ indicates the starting time of an activity, and
    \item $e : \mathcal{A} \rightarrow \mathbb{N}$ indicates the ending time.
\end{itemize}
\end{definition}

\noindent
We now define the semantics. In particular, for an expression $\kappa$ of the form $a_i.start + k$ (resp. $a_i.end - k$), its evaluation in $\rho$ is $\rho(\kappa) = s(a_i) + k$ (resp. $\rho(\kappa) = e(a_i) - k$). A schedule $\rho$ is \textbf{non-conflicting} if there exists no time $t \in \mathbb{N}$ and activities $a_1,a_2 \in \mathcal{A}$ with effects $(\kappa_1,f_1:=v_1) \in \textit{eff}(a_1), (\kappa_2,f_2:=v_2) \in \textit{eff}(a_2)$ such that $\rho(\kappa_1) = \rho(\kappa_2) \wedge f_1 = f_2$, i.e., two effects on the same fluent never overlap, as required by the PDDL semantics~\cite{pddl21}. 
To define the validity of a non-conflicting schedule, we first introduce a function tracking fluent changes over time.

\begin{definition}\label{def:eval-funct}
For a non-conflicting schedule $\rho = \langle p, s, e\rangle$, the \textbf{evaluation function} $\xi_\rho: \mathcal{V} \times \mathbb{N} \rightarrow \bigcup_{f \in \mathcal{V}} \text{Dom}(f)$ maps a fluent $f \in \mathcal{V}$ and a time point $t \in \mathbb{N}$ to the value of $f$ at time $t$ under $\rho$. It is defined as:
\begin{equation*}
\xi_\rho(f, t) = 
\begin{cases}
\textit{init}(f) & \text{if } t = 0\\
v & \text{if } \exists a \in \mathcal{A} \text{ s.t. } p(a) \wedge \\&  (\kappa, f:=v) \in \textit{eff}(a) \wedge \rho(\kappa) = t\\ 
\xi_\rho(f, t-1) & \text{otherwise}
\end{cases}   
\end{equation*}
\end{definition}

\noindent
Intuitively, $\xi_\rho(f, t)$ gives the value of fluent \textit{f} at time \textit{t}, set by the most recent activity \textit{a} ending by \textit{t} that updates \textit{f}. If none exists, it returns the initial value of \textit{f}.

\noindent
Validity and optimality of $\rho$ can then be defined as follows.

\begin{definition}\label{def:schedule-validity}
Let the set of activities active at time $t$ under schedule $\rho$ be $\mathcal{A}_\rho^t = \{a \in \mathcal{A} \mid p(a) \wedge s(a) \le t \le e(a)\}$.
A schedule $\rho$ is \textbf{valid} for an OS $\phi$ if it is non-conflicting and the following conditions hold.
\begin{enumerate}
    \item $\forall a \in \mathcal{A}$, $\neg p(a) \lor e(a) - s(a) \in [lb_a, ub_a]$, i.e., if the activity is present, its duration satisfies the duration bounds. 
    \item For each $r \in \mathcal{R}$, $\forall t \in \mathbb{N}. \sum_{a \in \mathcal{A}_\rho^t} \gamma_r^a \le \lambda_r$, i.e., total resource demand at any time does not exceed availability.
    \item Constraints in $\mathcal{C}_{\textit{t}}$ are satisfied using standard Boolean logic, with the value of atoms defined as follows:
    \begin{itemize}
        \item $a.present$ is true iff $p(a)$ (presence);
        \item $\kappa_1 - \kappa_2 \le \Delta t$ iff $\rho(\kappa_1) - \rho(\kappa_2) \le \Delta t$ (precedence).
    \end{itemize}
    \item Constraints in $\mathcal{C}_{\textit{f}}$ are satisfied: given $([\kappa_1, \kappa_2], a, f:= v) \in \mathcal{C}_{\textit{f}}$, either $\neg p(a)$ or $\forall t \in [\rho(\kappa_1), \rho(\kappa_2)]$, $\xi_\rho(f, t) = v$.
\end{enumerate}
\end{definition} 
\begin{definition}\label{def:schedule-optimality}
Given an OS $\phi$, a set of schedules $\mathcal{S}$, and a function $\textit{opt}: \mathcal{S} \rightarrow \mathbb{R}$ to be minimized, $\rho \in \mathcal{S}$ is \textbf{optimal} for $\phi$ if it is valid for $\phi$ and for every other valid schedule $\rho' \not= \rho \in \mathcal{S}$, $\textit{opt}(\rho) \le \textit{opt}(\rho')$. 
\end{definition}

We now incorporate motion activities, i.e., tasks involving object movement subject to motion constraints.

\begin{definition}\label{def:samp} A \textbf{Scheduling and Motion Planning} (SAMP) problem is a tuple $\psi = \langle\phi, \mathcal{O}, \mathcal{W}, \mathcal{Q}, u, i, \textit{mc}\rangle$, where:

\begin{itemize} 

    \item $\phi = \langle\mathcal{V}, \mathcal{A}, \mathcal{R}, \mathcal{C}, \textit{eff},  \textit{init}\rangle$ is an OS as per Definition~\ref{def:scheduling-with-fluents-and-effects}.

    \item $\mathcal{O} \subseteq \mathcal{R}$ is a set of movable objects, where each object $o \in \mathcal{O}$ is characterized by a geometric model $g_o$ and a control model $u_o$, with $\lambda_o = 1$ (only one is available). 
    
    \item $\mathcal{W} \subseteq \mathbb{R}^N$ (N = 2 or N = 3) is the workspace, i.e., the volume of reachable end-points for objects in $\mathcal{O}$. $\mathcal{W}_{free}$ is the portion of $\mathcal{W}$ that is free from fixed obstacles.

    \item $\mathcal{Q}$ is the configuration space, with $\mathcal{Q}_o \subseteq \mathcal{Q}$ the subset of $\mathcal{Q}$ representing the configurations that $o \in \mathcal{O}$ may assume given its motion model. 
    $occ(o, q) \subseteq \mathcal{W}_{free}$ is the set of points in $\mathcal{W}_{free}$ occupied by $o$ when in $q \in \mathcal{Q}_o$. 
    
    \item $i: \mathcal{O} \rightarrow \mathcal{Q}_o$ is a function that assigns to a movable object $o \in \mathcal{O}$ an initial configuration $q \in \mathcal{Q}_o$.

    \item $\textit{mc}: \mathcal{A}  \rightarrow \textit{mot}$ associates an activity $a$ to its motion constraint, where a motion constraint can be $\bot$ (indicating no motion constraint) or a tuple $\langle o_a, q^S_a, q^G_a\rangle$, where:
    \begin{itemize}
        \item $o_a \in \mathcal{O}$ is the movable object involved in the activity;
        \item $q^S_a, q^G_a \in \mathcal{Q}_{o_a}$ are the configurations it must assume at the start and the end of the activity, respectively.
    \end{itemize}
\end{itemize}
For each activity $a$ where $mc(a) \not= \bot$, we set $\gamma_{o_a}^a = 1$, i.e., each activity moving object $o$ uses the resource $o$.
\end{definition}

\noindent
This definition adds motion constraints to an OS problem. Since a trajectory $\tau(a): \mathbb{R}_{\ge 0} \rightarrow \mathcal{Q}$ specifies the configuration of object $o_a$ at each time $t \in [s(a), e(a)]$, describing its continuous motion from $q^S_a$ to $q^G_a$, we now extend solution schedules to handle SAMP problems.

\begin{definition}\label{def:samp-schedule}
A \textbf{SAMP schedule} for $\psi = \langle\phi, \mathcal{O}, \mathcal{W}, \mathcal{Q}, u, i, \textit{mc}\rangle$ is a tuple $\pi = \langle p, s, e, \tau \rangle$, with $\langle p, s, e \rangle = \rho$ a schedule for the OS problem $\phi$, and $\tau : \mathcal{A} \rightarrow \mathbb{R}_{\ge0} \rightarrow \mathcal{Q} \cup \{\bot\}$ a function that assigns to each $a \in \mathcal{A}$ a trajectory for the movable object $o_a$, if $\textit{mc}(a) \not= \bot$.
\end{definition}

\noindent
Note that we use (as customary) integer time for scheduling and real time for motion trajectories, following common practice in each domain; uniforming the time domain would require only minor adjustments in the formalization.  

Let $o_a$ be the object moved by activity $a\in\mathcal{A}$, i.e., whose motion constraint is $\textit{mc}(a) = \langle o_a$, $q^S_a, q^G_a\rangle$. A SAMP schedule is \textbf{non-conflicting} if $\rho$ is non-conflicting and there exists no time $t \in \mathbb{R}$ and activities $a_1 \not = a_2 \in \pi$ such that $o_{a_1} = o_{a_2}$ and $s(a_1) \le t \le e(a_1) \wedge s(a_2) \le t \le e(a_2)$, i.e., two activities moving the same object do not overlap in time.
Note that any valid OS schedule satisfies this condition, as movable objects are modeled as unary resources.

We now define a function that maps object configurations over time under a non-conflicting SAMP schedule.

\begin{definition}
Let $\pi$ be a non-conflicting SAMP schedule, and let the sequence of motions moving $o$ be $\mathcal{A}_\pi^o = \langle a \in \mathcal{A} \mid p(a) \wedge o = o_a \rangle = \langle a_0, a_1, \dots, a_n \rangle$,
ordered so that for any $a_i, a_j \in \mathcal{A}_\pi^o$, if $a_i$ precedes $a_j$ in $\pi$, then $s(a_i) < s(a_j)$. 
The \textbf{evaluation function} $\zeta_\pi : \mathcal{O} \times \mathbb{R}_{\ge 0} \rightarrow \mathcal{Q}$, which returns the configuration of $o \in \mathcal{O}$ at time $t \in \mathbb{R}_{\ge 0}$, is defined as:
\[
\scriptsize
\zeta_\pi(o, t) =
\begin{cases}
    i(o) & \text{if } t < s(a_0),\\
    \tau(a_i)(t) & \text{if } s(a_i) \le t \le e(a_i), i \in \{0, \text{...}, n\},\\
    \tau(a_i)(e(a_i)) & \text{if } e(a_i) < t < s(a_{\text{i+1}}), i \in \{0, \text{...}, \text{n-1}\},\\
    \tau(a_{n})(e(a_{n})) & \text{if } t > e(a_{n}).
\end{cases}
\]
\end{definition}

The validity of $\pi$ is then defined as follows.

\begin{definition}\label{def:samp-schedule-validity}
A non-conflicting SAMP schedule $\pi$ is \textbf{valid} for the SAMP problem $\psi$ if $\rho$ is valid for the OS problem $\phi$ as per Definition~\ref{def:schedule-validity} and the following constraints hold:
\begin{enumerate}
    \item $\forall t \in \mathbb{R}_{\ge 0}$, $\forall o_i,o_j \in \mathcal{O}$ such that $o_i \neq o_j$, $occ(o_i, \zeta_\pi(o_i, t)) \cap occ(o_j, \zeta_\pi(o_j, t)) = \emptyset$, i.e., object motions are collision-free.
    \item $\forall o \in \mathcal{O}$, $\forall t \in \mathbb{R}_{\ge 0}$, the configuration $\zeta_\pi(o, t)$ lies on a trajectory that is dynamically feasible under the control model $u_o$ of $o$; i.e., that can be executed by its controller.
    \item $\forall o \in \mathcal{O}$, let $\mathcal{A}^o_\pi = \langle a_0, \dots, a_n \rangle$ be the sequence of activities in $\pi$ moving $o$. Then, $\tau(a_i)(e(a_i)) = \tau(a_{i+1})(s(a_{i+1}))$ for all $i \in {0, \dots, n-1}$, ensuring that the trajectory of $o$ is continuous in space-time among all activities moving it.
\end{enumerate}
\end{definition}

One interesting case is the one with no fluents nor effects. This is practically relevant because not all schedulers support fluents (e.g., OR-Tools~\cite{cpsatlp}).
\begin{definition}
An \textbf{OS problem without fluents} is defined as OS $\phi$ with $\mathcal{V} = \emptyset$.
Accordingly, a \textbf{SAMP problem without fluents} is defined as SAMP $\psi$ with $\mathcal{V} = \emptyset$.
\end{definition} 

In this paper, we propose a framework for SAMP that supports different schedulers by expressing constraints either using fluent conditions and effects or just using precedence constraints. The framework is detailed in the next section.

\section{The Core Framework}
\begin{tikzpicture}[overlay, remember picture]
  \draw[overlay, fill opacity=0.2, fill=gray, draw=none]
    ($(pic cs:a) + (-0.05,0.3)$) 
    rectangle 
    ($(pic cs:b) + (7.35,0.25)$);
\end{tikzpicture}

\begin{algorithm}[t]
\DontPrintSemicolon
\small
\caption{The Core Framework}\label{alg1:meta-engine}\label{alg3:GetMotionOrRefine}
\Begin($\textsc{Solve}{(}\psi, \textit{opt}, t_p, \textit{timeout}{)}$)
{
    $\psi '\gets \psi$ \quad $it \gets 0$ 
    
    \While{Now() $<$ timeout}{
        $\rho$, \textit{status} $\gets$ \textit{get-schedule}($\psi'$, \textit{opt}) \Comment*{Invoke the scheduler} \label{alg1:get-schedule}

        \uIf{\textit{status} $\in$ [\texttt{VALID}, \texttt{OPTIMAL}]}{
            
        $\tau(\rho) \gets \emptyset$
    
        \textit{conf}(o) $\gets i(o) \quad \forall o \in \mathcal{O}$ \label{alg1:set-config}
        
         \ForEach(\Comment*[f]{Check each parallel motion group}){$\mathcal{G} \in \mathcal{P}(\rho)$} 
        {
            \tikzmark{a}
            \ForEach(\Comment*[f]{Geometric check of each activity}\label{alg1:check-geom}){$a \in \mathcal{G}$}
            { 
        
                    \textbf{if } $\neg$  \textsc{GetMotionOrRefine}($\{a\}$, $\psi'$, \textit{conf}, \texttt{GEOM}, $t_p$) \textbf{ then goto} 20
            }
        
            \ForEach(\Comment*[f]{Temporal check of each activity}\label{alg1:check-temp}){$a \in \mathcal{G}$}
            {
        
                \textbf{if } $\neg$ \textsc{GetMotionOrRefine}($\{a\}$, $\psi'$, \textit{conf}, \texttt{TIME}, $t_p$) \textbf{ then goto} 20\label{alg1:last-optimization}

            }
            \tikzmark{b}
            $\tau(\mathcal{G})\gets$ \textsc{GetMotionOrRefine}($\mathcal{G}$, $\psi'$, \textit{conf}, \texttt{ALL}, $t_p$) \label{alg1:get-motion-or-refine}
    
            \textbf{if } $\tau(\mathcal{G}) \not= \emptyset$ \textbf{ then } $\tau(\rho) \gets \tau(\rho) \cup \tau(\mathcal{G})$  \textbf{ else goto} 20 
    
            \textit{update(conf, $\mathcal{G}$)} \label{alg1:config-update}
        }
            \textbf{if } $\tau(\rho) \neq \emptyset$ \textbf{ then return } $\langle \rho, \tau\rangle$ \Comment*{Return  SAMP schedule} \label{alg1:plan}
            
        }
        \Else{
            \textbf{if } $it == 0$ \textbf{ then return } $\texttt{UNSOLVABLE}$ \label{alg1:unsolvable}

            $t_p \gets 2 \cdot t_p$ \quad $\psi ' \gets \psi$ \Comment*{Double timeout and reset $\psi$} \label{alg1:double-time}\label{alg1:reset-constraints}

        }
        $it \gets it + 1$

    }
    \Return $\texttt{INCOMPLETE}$
}

\vskip 3pt

\Begin($\textsc{GetMotionOrRefine}{(} \mathcal{G}, \psi, \textit{conf}, \textit{refs}, t_p{)}$)
{
    $\tau(\mathcal{G}) \gets \emptyset$ \quad \textit{path-found} $\gets \texttt{True}$
    
    $s_{\textit{min}} \gets \min(\{s(a) | a \in \mathcal{G}\})$ \label{alg3:s-min}

    $\mathcal{C}_\mathcal{G} \gets \{ (mc(a), \delta_a = s(a) - s_{min}) | a \in \mathcal{G}\}$ \label{alg3:group-constraints}
    
    \uIf{refs = \texttt{GEOM} $\wedge \mathcal{G} = \{a\}$}{
        \textit{path-found}, $\Sigma$, $\Omega$ $\gets$ \textit{get-path}($\mathcal{C}_\mathcal{G}$, \textit{conf}, $t_p$) \label{alg2:get-path}
    }
    \Else{
        $\tau(\overline{\mathcal{G}})$,  $\overline{d}$, $\overline{\delta}$, $\Sigma$, $\Omega$ $\gets$ \textit{get-motion}($\mathcal{C}_\mathcal{G}$, \textit{conf}, $t_p$) \Comment*{$\overline{\mathcal{G}} \subseteq \mathcal{G}$}\label{alg2:get-motion}
    }

    \uIf{$\tau(\overline{\mathcal{G}}) = \emptyset \vee \neg\textit{path-found}$}
    {
        $\psi$.\textit{add-geometric-refinements}($ \overline{\mathcal{G}}$, $\Sigma$, $\Omega$, \textit{conf}) \label{alg2:geometric-ref}
    }
    \uElseIf{refs = \texttt{TIME} or refs = \texttt{ALL}}
    {
        \uIf(\label{alg3:check-timings}){$\neg\bigwedge_{a\in \overline{\mathcal{G}}} \overline{d}_a + \overline{\delta}_a \leq d_a + \delta_a$} 
        {
        
            $\overline{\mathcal{C}}_{\overline{\mathcal{G}}} \gets \{ (mc(a), \overline{\delta}_a) | a \in \overline{\mathcal{G}}\}$ \label{alg2:group-constraints}
            
            $\psi$.\textit{add-temporal-refinements}($\overline{\mathcal{G}}$, $\overline{\mathcal{C}}_\mathcal{G}$, $\overline{d}$, \textit{conf})\label{alg2:temporal-ref}

            \Return $\emptyset$
        }
    }
    
    \Return $\tau(\mathcal{G})$ \label{alg3:return-refs}
}
\end{algorithm}

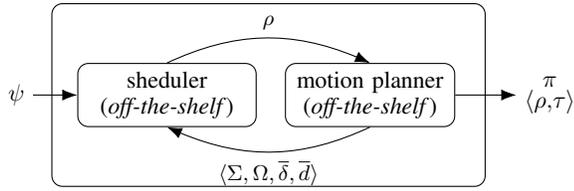
\begin{figure}[t]
\centering
\begin{tikzpicture}[scale=0.9, transform shape]
    \node[draw, rounded corners, minimum width=6.4cm, minimum height=2.7cm] (outer) {};

    \node[draw, rounded corners, rectangle,
          minimum width=2.5cm, minimum height=0.8cm,
          text width=2.4cm, align=center]
          (scheduler) at ([xshift=-1.5cm]outer.center)
          {sheduler \\(\textit{off-the-shelf})};

    \node[draw, rounded corners, rectangle,
          minimum width=2.5cm, minimum height=0.8cm,
          text width=2.2cm, align=center]
          (motionplanning) at ([xshift=1.5cm]outer.center)
          {motion planner \\ (\textit{off-the-shelf})};

    \node[align=center] (problem) at ([xshift=-0.9cm]scheduler.west) {$\psi$};

    \node[align=center] (samp-plan) at ([xshift=1.4cm]motionplanning.east) 
        {\LARGE $\substack{\pi \\ \langle \rho, \tau \rangle}$};

    \draw[-{Latex[length=2mm]}] (problem) -- (scheduler);
    \draw[-{Latex[length=2mm]}] (motionplanning) -- (samp-plan);

    \draw[-{Latex[length=2mm]}, bend left=25] 
        (scheduler.north) to node[above] {\small $\rho$} (motionplanning.north);

    \draw[-{Latex[length=2mm]}, bend left=25] 
        (motionplanning.south)
        to node[below] {\small $\langle \Sigma, \Omega, \overline{\delta}, \overline{d} \rangle$} 
        (scheduler.south);

\end{tikzpicture}
\caption{Our framework. Given a SAMP problem $\psi$, the scheduler sends a candidate schedule $\rho$ to the motion planner. If invalid, the planner returns \textit{geometric} (unreachable configurations $\Sigma$ and obstacles $\Omega$)  and \textit{temporal} (new delays $\overline{d}$ and durations $\overline{\delta}$) refinements until a valid SAMP schedule $\pi$ is found (with trajectories $\tau$), if one exists.}
\label{fig:core-framework}
\end{figure}

Our SAMP framework incrementally learns symbolic abstractions of motion tasks (Algorithm~\ref{alg1:meta-engine}). It interleaves an off-the-shelf scheduler, which proposes a motion-agnostic candidate schedule $\rho$ (line~\ref{alg1:get-schedule}), with an (instrumented) off-the-shelf motion planner that checks the feasibility of $\rho$ via \textsc{GetMotionOrRefine} (line~\ref{alg1:get-motion-or-refine}). The motion planner returns valid trajectories, used to decorate the motion activities in  $\rho$, if they exist; otherwise, it provides spatio-temporal refinements for the next scheduling iteration (Figure~\ref{fig:core-framework}).

The initial problem submitted to the scheduler is the OS $\phi$ from the SAMP problem enriched with simple constraints to ensure object trajectories are continuous (third condition of Definition~\ref{def:samp-schedule-validity}). This can be achieved either by a fluent tracking each object’s configuration and imposing a condition for each motion activity, or via precedence constraints restricting the admissible ordering of motion activities.
For the motion planning problem, solving it monolithically would be computationally infeasible; thus, we divide the schedule into \textit{parallel motion groups}: subsets of activities that can interfere with each other but are independent from other groups. 
\begin{definition}
Two activities $a, b \in \mathcal{A}$ are \textbf{parallel} in $\rho$ if $p(a)$ and $p(b)$ hold, and $\exists t \in \mathbb{R}$ such that $s(a) \le t \le e(a) \wedge s(b) \le t \le e(b)$. They are further defined as  \textbf{motion-parallel} if they are parallel, $\textit{mc}(a) \neq \bot$, and $\textit{mc}(b) \neq \bot$.
\end{definition}

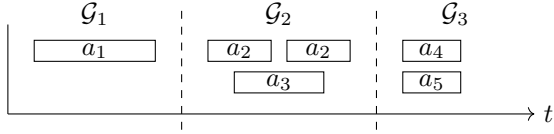
\begin{figure}[t]
    \centering
    \begin{tikzpicture}[scale=0.7]

        \draw[->] (0,0) -- (10,0) node[right] {$t$};
        \draw[-] (0,0) -- (0,1.7);
        
        \draw[dashed] (3.3,-0.3) -- (3.3,2.1);
        \draw[dashed] (7.0,-0.3) -- (7.0,2.1);
        
        \node at (1.65,1.9) {$\mathcal{G}_1$};
        \node at (5.15,1.9) {$\mathcal{G}_2$};
        \node at (8.5,1.9) {$\mathcal{G}_3$};
        
        \draw (0.5,1.4) rectangle (2.8,1) node[pos=.5] {$a_1$};
        
        \draw (3.8,1.4) rectangle (5.0,1) node[pos=.5] {$a_2$};
        \draw (5.3,1.4) rectangle (6.5,1) node[pos=.5] {$a_2$};
        \draw (4.3,0.8) rectangle (6.0,0.4) node[pos=.5] {$a_3$};
        
        \draw (7.5,1.4) rectangle (8.6,1) node[pos=.5] {$a_4$};
        \draw (7.5,0.8) rectangle (8.6,0.4) node[pos=.5] {$a_5$};
            
    \end{tikzpicture}
    \caption{Parallel motion groups $\mathcal{P}(\pi) = \{\mathcal{G}_1, \mathcal{G}_2, \mathcal{G}_3\}$.}
    \label{fig:parallel-motion-groups}
\end{figure}

A \textbf{\textit{parallel motion group}} $\mathcal{G}(\rho)$ ($\mathcal{G}$ for the rest of paper) is a maximal set of motion-parallel activities in $\rho$, with $\mathcal{P}(\rho)$ the set of all such groups (see Figure~\ref{fig:parallel-motion-groups}).
Given $\mathcal{G} \in \mathcal{P}(\rho)$, and the configuration of each object at the start time of $\mathcal{G}$ (\textit{conf}, line~\ref{alg1:set-config}), \textsc{GetMotionOrRefine} either: 

\begin{itemize}
    \item \textbf{Gets $\tau(\mathcal{G})$}: finds collision-free, temporally consistent trajectories for all the activities in $\mathcal{G}$. 

    \item \textbf{Adds geometric refinements}: exploits the motion planner's exploration to identify unreachable locations and blocking obstacles for those activities that are spatially infeasible and adds them as new constraints of $\psi$.
    
    \item \textbf{Adds temporal refinements}: adds constraints on execution durations and inter-activity delays for motions that are geometrically feasible but violate the temporal constraints imposed by the scheduler, ensuring safe synchronization.
    
\end{itemize}
If $\tau(\mathcal{G})$ exists, the planner updates each object's configuration to the goal of the last activity moving it (line~\ref{alg1:config-update}) and goes to the next group.
If all groups are valid, a SAMP plan $\pi = \langle \rho, \tau\rangle$ is returned (line~\ref{alg1:plan}), optimal under Definition~\ref{def:schedule-optimality} (assuming optimality depends solely on $\rho$, not on trajectory optimization); otherwise, new constraints are generated. 

Before describing how constraints are computed, we give a few final details of the core framework. Since many motion planners are sample-based, they may fail to terminate if no path exists or even time out despite a solution existing (\textit{false-negative} sensitivity). This means a schedule marked as unsolvable may be truly infeasible (line~\ref{alg1:unsolvable}) or \textit{falsely} unsolvable due to insufficient sampling. To address this, we impose a timeout $t_p$ per planner call and, when a solution is initially unsolvable, double $t_p$ up to an upper limit, reset the refinements (line~\ref{alg1:reset-constraints}), and restart, mitigating false negatives with minimal added complexity.
To further improve efficiency, we cache the trajectories of activities or groups whose motion constraints have already been validated. Before re-evaluating any constraints, the cache is checked to avoid redundant computations. This cache persists across restarts, enabling the reuse of previously validated trajectories and reducing computational overhead.
The final optimization (gray box of Algorithm~\ref{alg1:meta-engine}) reduces the cost of evaluating entire groups via a layering architecture that, before checking entire groups (Layer 2), validates their single activities (Layer 1). Each activity undergoes a geometric feasibility check  (line~\ref{alg1:check-geom}), followed by a temporal feasibility check (line~\ref{alg1:check-temp}). Both operate as specialized instances of \textsc{GetMotionOrRefine} (see Algorithm~\ref{alg3:GetMotionOrRefine}), but applied to single activities and their respective refinements. In this context, the geometric check effectively verifies the existence of a path, simplifying the motion planner’s role to that of a path finder (line~\ref{alg2:get-path}).
This pre-check boosts performance by avoiding unnecessary and expensive group-level synchronization checks, as shown in our experimental evaluation.

\section{Geometric and Temporal Refinements}
In this section, we detail how \textsc{GetMotionOrRefine} computes and formulates the spatio-temporal refinements.

\myparagraph{Geometric Refinements}
Let $\mathcal{Q}_\psi \subseteq \mathcal{Q}$ be the finite subset of configurations relevant to the problem, i.e., those actually involved in the motion activities of the problem, defined as:
\[
\small
\mathcal{Q}_\psi = \{q^S_a, q^G_a | a \in \mathcal{A}, mc(a) = \langle o_a,  q^S_a, q^G_a\rangle\}, 
\]

If the motion planner fails to find a trajectory for the group $\mathcal{G}$ (line~\ref{alg2:get-motion}), or a path in the case of a singleton group (line~\ref{alg2:get-path}), it reports the spatial conflicts encountered, being:

\begin{itemize}
    \item $\Sigma = \{\sigma_a \subseteq \mathcal{Q}_\psi \mid a \in \mathcal{G}\}$: the set of \textbf{unreachable configurations} for each activity $a \in \mathcal{G}$. For an activity $a$ moving $o_a$ from $q^S_a$ to $q^G_a$, the reachable set is:
    \[
        \widetilde{\sigma}_a = \left\{ q \in \mathcal{Q}_\psi \mid \textit{occ}(o_a, q) \subseteq \textit{reach}(q^S_a) \right\},
    \]
    where $\textit{reach}(q^S_a)$ is the region reachable from $q^S_a$, computed from the area explored by the motion planner. $\sigma_a = \mathcal{Q}_\psi \setminus \widetilde{\sigma}_a$ includes all unreachable configurations, i.e., all configurations outside $\textit{reach}(q^S_a)$,  including $q^G_a$ if not reachable.
    
    \item $\Omega = \{\omega_a \subset \mathpzc{O} \mid a \in \mathcal{G}\}$: the set of \textbf{blocking obstacles} identified by collision checking for each $a \in \mathcal{G}$. Given $a$, when using a sampling-based motion planner, an object $o \in \mathpzc{O}$ is added in $\omega_a$ if $o \neq o_a$ and a collision was detected between $o$ and $o_a$ when extending the motion-tree of $o_a$, i.e., $o$ blocks the expansion of possible motions of $o_a$.
\end{itemize}
Such spatial conflicts are used to inform the scheduler that the configuration of at least one blocking obstacle must be modified to make an otherwise unreachable configuration reachable. This refinement is performed by $\psi$.\textit{add-geometric-refinements}($\mathcal{G}, \Sigma, \Omega, \textit{conf}$) (line~\ref{alg2:geometric-ref}).
To formalize this constraint, we first define the set of \textbf{rivals} of a parallel motion group $\mathcal{G}$ as the motion activities not in $\mathcal{G}$:
\[
 \small
\rivals = \{ r \in \mathcal{A} \;|\; r \notin \mathcal{G}, mc(r) \not = \bot \}.
\]
An activity $r \in \rivals $ does not overlap with $\mathcal{G}$ ($\neg\textit{overlaps}(r, \mathcal{G})$) if it does not exist or is entirely scheduled before or after $\mathcal{G}$:
\[
\small
r.\text{present} \rightarrow [\bigwedge_{a \in \mathcal{G}}(r.\text{end} < a.\text{start}) \vee \bigwedge_{a \in \mathcal{G}}(r.\text{start} > a.\text{end})]
\]
Given $\mathcal{G}$, let $\mathcal{G}_o = \{a \in \mathcal{G} \mid o_a = o\}$ be the activities in $\mathcal{G}$ moving $o$, and let $a^o_{min}$ be the first activity moving $o$ (i.e., $s(a^o_{min}) \le s(a)$ $ \forall a \in \mathcal{G}_o$). We define the refinement condition formula $\textsc{RCond}(\mathcal{G})$ as:
$$
 \!\bigwedge_{a \in \mathcal{G}} \! a.\text{present} \wedge \!\!\!\!\!\!\!\!\!\! \bigwedge_{\substack{o \in \mathcal{O}\\a \in \mathcal{G}_o \setminus \{a^o_{min}\}}} \!\!\!\!\!\!\!\!\!\!\! a^o_{min}.\text{end} \le a.\text{start} \wedge \!\! \bigwedge_{r \in \rivals} \!\! \neg \textit{overlaps}(r, \mathcal{G})
$$
It specifies the condition under which all activities in $\mathcal{G}$ are scheduled, with each activity moving an object, and all rival activities not overlapping $\mathcal{G}$ (temporal and geometric refinements thus depend only on the activities of the group and the initial configuration of each movable object).
\noindent
The new constraint sent to the scheduler is then defined as follows: 
$$
\small
\textsc{RCond}(\mathcal{G}) \! \rightarrow \!\!\!\!\!\! \bigvee_{b \in \mathcal{G}, o \in \omega_b} \!\!\!\!\!\! \textsc{ChConf}(b, \textit{conf}(o))
$$
with $o \in \omega_b$ being a blocking obstacle for $b \in \mathcal{G}$ and \textit{config(o)} its current configuration.  That is, if $\mathcal{G}$ is scheduled and none of its rival activities overlap with it, then the configuration of at least one blocking obstacle must change.
This constraint can be encoded either using fluents or without them.

\begin{figure}[!t]
\centering
\begin{tikzpicture}
\node (mainfig) {
\begin{tikzpicture}[scale=0.6]
    \draw[->] (0,-0.3) -- (9.5,-0.3);
    \draw[-] (0,-0.3) -- (0,3.9);

    \node at (9.2,-0.6) {\footnotesize $t$};
    
    \node at (0,-0.6) {\footnotesize $s_\text{min}$};

    \draw[thick] (0,0.2) rectangle (5,0.7);
    \node at (2.5,0.45) {\footnotesize $a$};

    \draw[thick] (6.2,0.2) rectangle (8.6,0.7);
    \node at (7.4,0.45) {\footnotesize $c$};

    \draw[thick] (2,1.2) rectangle (7,1.7);
    \node at (4.5,1.45) {\footnotesize $b$};

    \draw[<->] (2,2) -- node[above,yshift=-2pt] {\footnotesize $d_b$} (7,2);

    \draw[<->] (0,2.85) -- node[above,yshift=-2pt] {\footnotesize $\overline{\delta}_b$} (3.5,2.85);
    \draw[<->] (0,1.45) -- node[above,yshift=-2pt] {\footnotesize $\delta_b$} (2,1.45);

    \draw[thick, fill=gray!20] (3.5,2.6) rectangle (9,3.1);
    \node at (6,2.85) {\footnotesize $b$};

    \draw[<->] (3.5,3.4) -- node[above,yshift=-2pt] {\footnotesize $\overline{d}_b$} (9,3.4);

    \draw[dashed] (5,-0.3) -- (5,1.0);
    \draw[dashed] (6.2,-0.3) -- (6.2,1.0);
    \draw[dashed] (8.6,-0.3) -- (8.6,1.0);
    \draw[dashed] (2,-0.3) -- (2,2.3);
    \draw[dashed] (7,-0.3) -- (7,2.3);
    \draw[dashed] (3.5,-0.3) -- (3.5,3.9);
    \draw[dashed] (9,-0.3) -- (9,3.9);
\end{tikzpicture}
}; 

\node[anchor=north west, xshift=-23mm, yshift=-5mm]
    at (mainfig.north west)
    {\includegraphics[width=0.28\columnwidth]{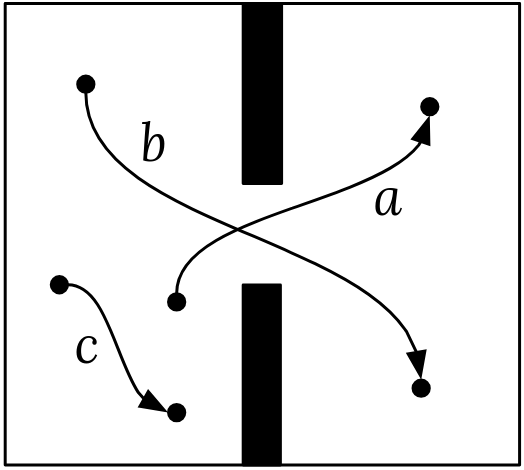}};

\end{tikzpicture}

\caption{
Temporal refinement for $\mathcal{G}=\{a,b,c\}$, starting at $s_{\min}$ (start of $a$). 
The motion planner delays the start of $b$ (from $\delta_b$ to $\overline{\delta}_b$) and increases its duration (from $d_b$ to $\overline{d}_b$).
}
\label{fig:temporal-info}
\end{figure}

In the case of fluents, for each activity $b \in \mathcal{G}$ we introduce into $\psi$ a corresponding auxiliary activity $b'$. The activity $b'$ has the same start and end times as $b$, and it includes a precondition on the fluent $f_o$, which represents the configuration of the object $o \in \omega_b$. This precondition requires that the value of $f_o$ be \emph{different} from the blocking configuration $\mathit{conf}(o)$. Intuitively, the role of $b'$ is to ensure that, whenever $b$ is relevant for execution, the object $o$ is not in the configuration that would block or invalidate the execution of $b$. Then, our refinement requires $b'$ to be present:
\begin{equation*}
\small
\begin{aligned}
\textsc{ChConf}(b, \textit{conf}(o)) =~ & b'.\text{present} \wedge (b'.\text{start} = b.\text{start}) \\
&\wedge (b'.\text{end} = b.\text{end})
\end{aligned}
\end{equation*}

\begin{figure*}
    \centering
    \includegraphics[width=\linewidth]{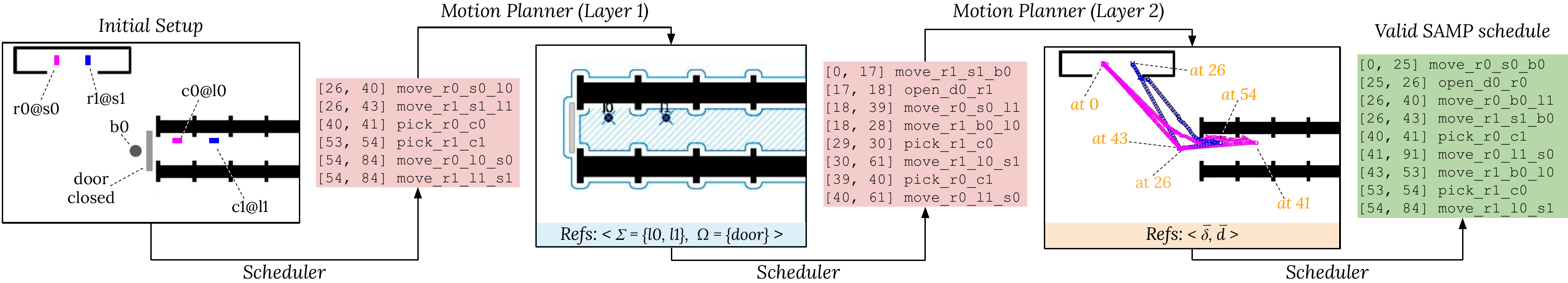}
    \caption{A logistics scenario with two robots ($r_0, r_1$) delivering two items ($c_0, c_1$) from $l_0$ and $l_1$. The first schedule is infeasible as $\Sigma=\{l_0,l_1\}$ is blocked by $\Omega=\{\text{door}\}$ (Layer~1, RRT). The second schedule is geometrically feasible but trajectories need updated delays $\overline{\delta}$ and durations $\overline{d}$ (Layer~2, ST-RRT*). Such motion planning's feedback leads to a final valid SAMP schedule.}
    \label{fig:samp_example}
\end{figure*}

Without fluents, \textit{helper activities} $\mathcal{H} = \{h \in \mathcal{A} \mid \textit{mc}(h) = \langle o_h, q^S_h, q^G_h\rangle \wedge q^G_h \neq \textit{conf(o)}\}$ move $o$ to any state different from \textit{conf(o)} and \textit{deleter activities} $\mathcal{X} = \{x \in \mathcal{A}  \mid \textit{mc}(x) = \langle o, q^S_x, q^G_x\rangle \wedge q^G_x = \textit{conf(o)}\}$ place $o$ in the blocking configuration \textit{conf(o)}. We define $\textsc{ChConf}(a, \textit{conf}(o))$ as:
\begin{equation*}
\small
\begin{split}
\textsc{Del}(b, \textit{conf}(o))\vee \bigvee_{h \in \mathcal{H}} \left( \phantom{\bigvee_{h \in \mathcal{H}}\!\!\!\!\!\!\!\!\!\!\!}(h.\text{present} \wedge (h.\text{end} < b.\text{start})) \wedge \right.\\[-6pt]
\left. \bigwedge_{x \in \mathcal{X}} (x.\text{present} \rightarrow (x.\text{end} < h.\text{start}) \vee (x.\text{start} > b.\text{end})) \right) 
\end{split}
\end{equation*}
with $\textsc{Del}(b, \textit{conf}(o))$ being 
\begin{equation*}
\small
\begin{cases}
\bigwedge_{x \in \mathcal{X}} x.\text{present} \rightarrow (x.\text{start} > b.\text{end}) & \text{if } \textit{conf}(o) \not= i(o)\\
False & \text{otherwise}
\end{cases}   
\end{equation*}
Intuitively, this constraint requires that either there is no deleter activity before $b$ and the initial configuration of $o$ is different from \textit{conf(o)}, or that there exists an helper activity occurring before $b$ and any deleter activities happen before the helper or after $b$. In essence, obstacles must be removed before executing a motion they would otherwise block.

To improve performance and avoid repeated computation, we propagate and cache reachability information for all equivalent objects, i.e., sharing the same geometry and control, located within the same reachability area. For singleton groups $\mathcal{G} = \{a\}$, we generalize geometric constraints to all activities moving equivalent objects from the same reachability area $\widetilde{\sigma}_a$ toward a target configuration within the set of configurations $\sigma_a$ deemed unreachable by $a$.

\myparagraph{Temporal Refinements}
Geometric feasibility does not guarantee temporal feasibility: scheduled times may differ from those needed for execution, and parallel motions may require delay adjustments for collision-free synchronization.

\textsc{GetMotionOrRefine} performs this check. 
It computes the earliest start time $s_{\textit{min}} = \min\{s(a) \mid a \in \mathcal{G}\}$ among the activities of the group (line~\ref{alg3:s-min}), and it collects $\mathcal{C}_{\mathcal{G}}$ (line~\ref{alg3:group-constraints}), i.e., the set of motion constraints with scheduled start times $\delta_a = s(a) - s_{\textit{min}}$.
If for at least one subset $\overline{\mathcal{G}} \subseteq \mathcal{G}$ (possibly $\mathcal{G}$ itself) the motion planner identifies trajectories $\tau(\overline{\mathcal{G}})$ that are geometrically feasible but fail to satisfy the timing constraints imposed by the scheduler (line~\ref{alg2:get-motion}), then \textit{get-motion} immediately returns (as this is sufficient to prove the whole candidate schedule is unfeasible) and outputs:

\begin{itemize}
    \item $\overline{d} = \{\overline{d}_a \in \mathbb{R}_{\geq 0} \mid a \in \overline{\mathcal{G}}\}$: new estimated durations.
    \item $\overline{\delta} = \{\overline{\delta}_a \in \mathbb{R}_{\geq 0} \mid a \in \overline{\mathcal{G}}\}$: new estimated delays from $s_{\textit{min}}$
\end{itemize}
These values are computed from the space-time trajectories generated by the motion planner. Specifically, given an activity $a$ moving $o_a$, and given the space-time sequence of states along its planned trajectory, we compute the actual motion start time $\overline{\delta}_a$ as the earliest timestamp at which $o_a$ exhibits non-negligible translational or angular displacement with respect to $s_{\textit{min}}$. We then determine the motion duration $\overline{d}_a$ by summing the time intervals $\Delta t$ from the first movement of $o_a$ until it comes to rest (see Figure~\ref{fig:temporal-info}).
In this case, \textsc{GetMotionOrRefine} verifies whether the needed timing $\overline{d}_a + \overline{\delta}_a$ does not exceed the scheduled $d_a + \delta_a$ for $a \in \overline{\mathcal{G}}$ (line~\ref{alg3:check-timings}). If this condition holds, the space-time trajectory is deemed valid (cached) and returned: we assume it is always possible for movable objects to pause. 
If the computed timings exceed the scheduled ones, $\psi$.\textit{add-temporal-refinments}($\overline{\mathcal{G}}$, $\overline{\mathcal{C}}_g$, $\overline{d}$, \textit{conf}) uses needed durations ($\overline{d}$), delays ($\overline{\delta}$, included in $\overline{\mathcal{C}}_g$), and current configurations (\textit{conf}) to add to the problem a new temporal refinement indicating that $\overline{\mathcal{G}}$ cannot be executed as scheduled unless at least one duration or delay is adjusted (line~\ref{alg2:temporal-ref}). Formally: 
\begin{align*}
\small
\textsc{RCond}(\mathcal{G}) \rightarrow \textsc{ChTime}(\overline{\mathcal{G}})
\end{align*}

\noindent
This means that if the parallel group $\mathcal{G}$ is scheduled and no rival $r \in \rivals$ overlaps with any $a \in \mathcal{G}$, the timing of activities in $\overline{\mathcal{G}}$ must be adjusted. Given $\overline{\delta}_a$, $\overline{d}_a$, and $\omega_a$, $\textsc{ChTime}(\overline{\mathcal{G}})$ is
\begin{equation*}
\small
\begin{split}
\bigvee_{\substack{a\in\mathcal{G}\\ o \in \mathcal{O}}} \!\! \textsc{ChConf}(a, \textit{conf}(o)) \vee
\bigvee_{a \in \overline{\mathcal{G}}} a.\text{start} - \min_{b\in \overline{\mathcal{G}}} (b.\text{start}) < \delta_a \vee \\
\bigvee_{a\in \overline{\mathcal{G}} | (d_a+\delta_a) < (\overline{\delta}_a + \overline{d}_a)} (a.\text{end} - \min_{b\in \overline{\mathcal{G}}} (b.\text{start})) \geq \overline{\delta}_a + \overline{d}_a
\end{split}
\end{equation*}
Thus, the scheduler must either require an object’s configuration to change before at least one group activity starts, advance the start of at least one activity in the subgroup, or extend the duration of some activity 
$a$ in the subgroup to at least the value $\overline{\delta}_a + \overline{d}_a$ estimated by the motion planner. 

As an example, consider the parallel motion group $\mathcal{G} = \{a, b, c\}$ of Figure~\ref{fig:temporal-info}, which starts at $s_{\min} = s(a)$ (the start time of $a$). To ensure $b$ is feasible when executed in parallel with $a$ ($\overline{\mathcal{G}} = \{a, b\}$), the motion planner schedules $b$ to start at $\overline{\delta}_b$ with an updated duration $\overline{d}_b$ (no obstacle obstructs $o_b$, the object moved by $b$). In this case, the motion planner must inform the scheduler that either (i) $b$ must be anticipated with respect to the current schedule (an option the scheduler has not yet requested the motion planner to evaluate), or (ii) $b$ must be assigned a new end time equal to $\overline{\delta}_b + \overline{d}_b$. Formally:
\begin{equation*}
\footnotesize
\textsc{ChTime}(\overline{\mathcal{G}}) =  b.\text{start} - s(a) < \delta_b \vee  b.\text{end} - s(a) \ge \overline{\delta}_b + \overline{d}_b .
\end{equation*}

For singleton groups, temporal refinements apply to all equivalent activities, as with geometric refinements. 

As a result, our framework synchronizes parallel motion activities by postponing starts, adjusting trajectories and durations, or executing stop-and-go maneuvers on the objects.

\paragraph{Formal guarantees.} Our framework is \textit{sound}: it returns a solution only if the schedule satisfies all constraints and the motion planner finds valid trajectories for all its motion activities. It is \textit{relatively optimal} for makespan optimization: if the motion planner computes duration-minimal solutions and the scheduler generates makespan-minimal plans, the resulting SAMP solution is makespan optimal. 
Assuming the motion planner is complete, \textit{relative completeness} (returning a solution if one exists) relies on showing that the learned constraints always prune the candidate schedule from the solution space of the scheduler and do not cut any valid solution. The first property follows from the presence of spatio-temporal refinements, while the second relies on the refinements being triggered by $\textsc{RCond}$
\ifextendedversion
 (see Appendix).
\else
.
\fi

\section{Experimental Evaluation}
\begin{table*}[t]
\renewcommand{\arraystretch}{1.2} 
\setlength\tabcolsep{2pt}
\resizebox{\textwidth}{!}{%
\begin{tabular}{l||lll|lll|lll|lll|lll|lll}
\textbf{Benchmark} &
  \multicolumn{3}{c|}{\textbf{CPSE} (no fluents) } &
  \multicolumn{3}{c|}{\textbf{Aries} (no fluents)} &
  \multicolumn{3}{c||}{\textbf{Aries} (with fluents) }&
  \multicolumn{3}{c|}{\textbf{CPSE-opt} (no fluents) }&
  \multicolumn{3}{c|}{\textbf{Aries-opt} (no fluents) }  &
  \multicolumn{3}{c}{\textbf{Aries-opt} (with fluents)} \\
&
  \multicolumn{1}{c|}{\#{\textit{sol}}} &
  \multicolumn{1}{c|}{$t$ [s] (\% $t_p$)} &
  \multicolumn{1}{c|}{\textit{refs}} &
  
  \multicolumn{1}{c|}{\#{\textit{sol}}} &
  \multicolumn{1}{c|}{$t$ [s]  (\% $t_p$)} &
  \multicolumn{1}{c|}{\textit{refs}} &
  
  \multicolumn{1}{c|}{\#{\textit{sol}}} &
  \multicolumn{1}{c|}{$t$ [s]  (\% $t_p$)} &
  \multicolumn{1}{c||}{\textit{refs}} &
  
  \multicolumn{1}{c|}{\#{\textit{sol}}} &
  \multicolumn{1}{c|}{$t$ [s]  (\% $t_p$)} &
  \multicolumn{1}{c|}{\textit{refs}} &
  
  \multicolumn{1}{c|}{\#{\textit{sol}}} &
  \multicolumn{1}{c|}{$t$ [s]  (\% $t_p$)} &
  \multicolumn{1}{c|}{\textit{refs}} &
  
  \multicolumn{1}{c|}{\#{\textit{sol}}} &
  \multicolumn{1}{c|}{$t$ [s]  (\% $t_p$)} &
  \multicolumn{1}{c}{\textit{refs}}  \\ 

\hline
 \textsc{Log. OC-DO}  &
\multicolumn{1}{c|}{16.7} & \multicolumn{1}{c|}{ 323 (84\%) } & \multicolumn{1}{c|}{0.0, 6.4, 1.6} &
\multicolumn{1}{c|}{9.0} & \multicolumn{1}{c|}{ 60 (81\%) } & \multicolumn{1}{c|}{0.0, 4.2, 0.0} &
\multicolumn{1}{c|}{18.3} & \multicolumn{1}{c|}{ 286 (88\%) } & \multicolumn{1}{c||}{0.0, 6.1, 2.5} &
\multicolumn{1}{c|}{13.3} & \multicolumn{1}{c|}{ 227 (77\%) } & \multicolumn{1}{c|}{0.0, 5.0, 0.9} &
\multicolumn{1}{c|}{10.3} & \multicolumn{1}{c|}{ 126 (72\%) } & \multicolumn{1}{c|}{0.0, 4.2, 0.2} &
\multicolumn{1}{c|}{12.7} & \multicolumn{1}{c|}{ 166 (78\%) } & \multicolumn{1}{c}{0.0, 4.9, 0.8} \\
\hline
 \textsc{Log. OC-DC}  &
  \multicolumn{1}{c|}{14.7} & \multicolumn{1}{c|}{ 359 (80\%) } & \multicolumn{1}{c|}{1.0, 9.1, 1.5} &
  \multicolumn{1}{c|}{11.0} & \multicolumn{1}{c|}{ 161 (85\%) } & \multicolumn{1}{c|}{1.0, 7.0, 0.2} &
  \multicolumn{1}{c|}{19.3} & \multicolumn{1}{c|}{ 377 (87\%) } & \multicolumn{1}{c||}{1.0, 10.9, 2.1} &
  \multicolumn{1}{c|}{12.3} & \multicolumn{1}{c|}{ 416 (75\%) } & \multicolumn{1}{c|}{1.0, 8.6, 1.9} &
  \multicolumn{1}{c|}{8.0} & \multicolumn{1}{c|}{ 251 (75\%) } & \multicolumn{1}{c|}{1.0, 6.5, 0.0} &
  \multicolumn{1}{c|}{10.0} & \multicolumn{1}{c|}{ 203 (78\%) } & \multicolumn{1}{c}{1.0, 6.9, 0.3} \\
\hline
 \textsc{Log. ALL-DO}  &
  \multicolumn{1}{c|}{14.0} & \multicolumn{1}{c|}{ 447 (72\%) } & \multicolumn{1}{c|}{0.0, 10.5, 2.1} &
  \multicolumn{1}{c|}{9.3} & \multicolumn{1}{c|}{ 96 (79\%) } & \multicolumn{1}{c|}{0.0, 6.4, 0.1} &
  \multicolumn{1}{c|}{17.0} & \multicolumn{1}{c|}{ 270 (87\%) } & \multicolumn{1}{c||}{0.0, 10.5, 2.6} &
  \multicolumn{1}{c|}{9.7} & \multicolumn{1}{c|}{ 349 (76\%) } & \multicolumn{1}{c|}{0.0, 8.7, 3.7} &
  \multicolumn{1}{c|}{7.3} & \multicolumn{1}{c|}{ 198 (81\%) } & \multicolumn{1}{c|}{0.0, 7.3, 3.1} &
  \multicolumn{1}{c|}{8.3} & \multicolumn{1}{c|}{ 195 (79\%) } & \multicolumn{1}{c}{0.0, 6.8, 1.8} \\
\hline
 \textsc{Log. ALL-DC}  &
  \multicolumn{1}{c|}{11.7} & \multicolumn{1}{c|}{ 401 (68\%) } & \multicolumn{1}{c|}{1.0, 12.3, 1.1} &
  \multicolumn{1}{c|}{10.3} & \multicolumn{1}{c|}{ 166 (81\%) } & \multicolumn{1}{c|}{0.9, 7.2, 0.4} &
  \multicolumn{1}{c|}{16.0} & \multicolumn{1}{c|}{ 392 (81\%) } & \multicolumn{1}{c||}{1.0, 11.9, 2.5} &
  \multicolumn{1}{c|}{7.3} & \multicolumn{1}{c|}{ 336 (76\%) } & \multicolumn{1}{c|}{1.0, 12.4, 3.1} &
  \multicolumn{1}{c|}{2.3} & \multicolumn{1}{c|}{ 165 (\textbf{92}\%) } & \multicolumn{1}{c|}{1.0, 3.6, 2.7} &
  \multicolumn{1}{c|}{6.0} & \multicolumn{1}{c|}{ 224 (76\%) } & \multicolumn{1}{c}{1.0, 10.3, 0.6} \\
\hline
 \textsc{JSP}  &
  \multicolumn{1}{c|}{13.0} & \multicolumn{1}{c|}{ 355 (76\%) } & \multicolumn{1}{c|}{1.5, 6.7, 0.3} &
  \multicolumn{1}{c|}{12.0} & \multicolumn{1}{c|}{ 152 (89\%) } & \multicolumn{1}{c|}{1.5, 3.6, 0.1} &
  \multicolumn{1}{c|}{17.0} & \multicolumn{1}{c|}{ 389 (91\%) } & \multicolumn{1}{c||}{1.9, 9.5, 0.5} &
  \multicolumn{1}{c|}{11.7} & \multicolumn{1}{c|}{ 342 (65\%) } & \multicolumn{1}{c|}{1.4, 3.9, 0.1} &
  \multicolumn{1}{c|}{12.3} & \multicolumn{1}{c|}{ 210 (77\%) } & \multicolumn{1}{c|}{1.5, 3.7, 0.0} &
  \multicolumn{1}{c|}{17.7} & \multicolumn{1}{c|}{ 291 (79\%) } & \multicolumn{1}{c}{2.0, 6.5, 0.1} \\
\hhline{=||=|=|=|=|=|=|=|=|=||=|=|=|=|=|=|=|=|=}
\textsc{\textbf{Total}} &
  \multicolumn{1}{c|}{70.0} & \multicolumn{1}{c|}{ 378 (76\%) } & \multicolumn{1}{c|}{0.7, 8.8, 1.4} &
  \multicolumn{1}{c|}{51.7} & \multicolumn{1}{c|}{ 132 (83\%) } & \multicolumn{1}{c|}{0.8, 5.7, 0.2} &
  \multicolumn{1}{c|}{\textbf{87.7}} & \multicolumn{1}{c|}{ 343 (87\%) } & \multicolumn{1}{c||}{0.8, 9.7, 2.1} &
  \multicolumn{1}{c|}{54.3} & \multicolumn{1}{c|}{ 332 (74\%) } & \multicolumn{1}{c|}{0.7, 7.2, 1.7} &
  \multicolumn{1}{c|}{40.3} & \multicolumn{1}{c|}{ 194 (77\%) } & \multicolumn{1}{c|}{0.7, 5.1, 0.8} &
  \multicolumn{1}{c|}{54.7} & \multicolumn{1}{c|}{ 227 (78\%) } & \multicolumn{1}{c}{0.9, 6.7, 0.6} \\

\end{tabular}%
}
\caption{Overall performance: each cell shows the number of problems solved (\# \textit{sol}), average planning time in seconds ($t$), percentage of time spent in motion planning ($\% t_p$), and average refinement counts (\textit{refs}) by type and layer: geometric (single activity), temporal (single activity), and group (combined geometric and temporal). All averaged over three runs per instance.}
\label{table:performance}
\end{table*}

We now evaluate our framework's ability to generate valid plans in complex multi-object scenarios using state-of-the-art solvers. We extend the logistics benchmark and the Job Shop Problem with transportation (JSP)~\cite{10.1007/978-3-319-33625-1_1} to include navigation tasks, stressing both the scheduler and motion planner with space-time refinements:

\begin{itemize}
    \item \textbf{Logistics}: $n_r$ robots, starting at a depot, must transport items from $n_s$ shelves back to the depot (as in Figure~\ref{fig:example1}). Shelves are arranged into narrow corridors, sometimes blocked by obstacles (closed doors) that must be moved to allow access. Each shelf contains $n_i$ items, accessible from either the corridor (inner) side or the outer side.

    \item \textbf{JSP}: $n_r$ robots must move $n_i$ items between $n_m$ machines for treatment. Then, each item is placed on a pallet for collection. The machines are initially blocked by closed doors. 
\end{itemize}
\color{black}
Available activities include robot \textit{navigation}, door \textit{opening}/\textit{closing}, and item \textit{loading}/\textit{unloading}, all with certain durations. 
Only navigation requires motion planning with obstacle avoidance, door activities are instantaneous changes of door configurations, and others are symbolic.
The optimization metric aims to minimize the makespan.

\myparagraph{Tests} As the first SAMP study, we adopt a 2D setup to establish the foundations. Despite its apparent simplicity, the problem remain challenging: multi-robot coordination requires time-parametrized, dynamically feasible trajectories in continuous space (with car-like dynamics), and the motion planner's search space grows exponentially with the number of agents. We consider the following test cases:

\begin{itemize}
    \item \textbf{Logistics}. We analyze $n_r \in \{1, 2, 3\}$ robots and $n_s = 2$ shelves forming a narrow corridor with an entrance door, each shelf holds $n_i = 4$ items [tot. instances: 24]. The door is open (DO in Table~\ref{table:performance}) or closed (DC), and items are picked only from the corridor (OC) or from both sides (ALL).
    
    \item \textbf{JSP}. We consider $n_r \in \{1, 2, 3\}$ robots, $n_i \in \{1, 2, 3\}$ items to be treated, and $n_m \in \{1, 2, 4, 6\}$ machines for treatment (i.e., $n_m$ doors to open) [tot. instances: 36]. 
\end{itemize}

Our framework is domain-independent and built upon the Unified Planning library~\cite{unified_planning_softwarex2025}, enabling seamless substitution or extension of the scheduling methods. On the motion planning side, it integrates the Open Motion Planning Library (OMPL)~\cite{sucan2012the-open-motion-planning-library}, supporting all its planners.
In our experiments, we use Aries~\cite{bitmonnot:hal-04174800} and its optimal variant Aries-opt (both with and without fluents), and our OR-Tools-based Constraint Programming Scheduling Engine (CPSE, without fluents). They are combined with RRT~\cite{LaValle1998RapidlyexploringRT} for path planning (Layer 1) and ST-RRT*~\cite{22-grothe-ICRA} for space-time multi-robot motion planning ($t_p = 10$ s, Layer 2), following the layering approach of the gray box of Algorithm~\ref{alg1:meta-engine}. We also instrument the collision checker to record obstacles encountered during the search. 

Layer 2 checks motion feasibility sequentially: for parallel robots, spatio-temporal trajectories are planned one at a time, each respecting previously planned trajectories to avoid collisions. This instantiation trades completeness for scalability, and remains aligned with the ST-RRT* setup~\cite{22-grothe-ICRA, 10.1609/icaps.v35i1.36120}, where scalability has been proved for up to 11 concurrent robots. This highlights the difficulty of our setting, which combines an already challenging multi-robot motion-planning problem with a scheduling problem that must sequence many (potentially optional) activities.

Tests were run on an AMD EPYC 7413 with a 1800 s timeout and a 20 GB memory limit.

\myparagraph{Results}
Table~\ref{table:performance} shows the performance averaged over three runs per instance, accounting for variability of sampling-based motion planners.
In both domains, all solvers solve at least one instance with 3 robots, confirming correct handling of temporal constraints and inter-object synchronization. Scheduling requires many refinement loops (avg. 9.1) due to optional navigation activities, while motion planning remains costly, taking up to 92\% of total planning time.

The framework effectively handles geometric complexity: in logistics scenarios, performance is similar whether doors are open or closed, showing robustness to spatial constraints.
On the temporal side, solving instances with up to 3 robots indicates that the framework can manage synchronization.
To further evaluate this ability and the benefits of parallelizing multi-robot activities, we compared the makespan of solutions produced by our approach with fully sequential schedules (i.e., no parallelization). For the instances considered, the average theoretical maximum improvement in makespan due to parallelization is 50\% (e.g., 2 robots picking 2 items from shelves in parallel versus sequentially). In all cases where parallelization can improve the makespan, our approach achieves an average reduction of 41\%.

Handling synchronization justifies the use of ST-RRT*, a typically expensive motion planner that, however, accounts for kinodynamic constraints and produces time-optimal trajectories. Planning times remain relatively low due to the layered architecture, which absorbs most geometric and temporal refinements at the single-action level, reducing multi-robot ST-RRT* calls (\textit{refs} column: single-action geometric refinements, single-action temporal refinements, joint geometric–temporal refinements at the motion-parallel-group level).
Layering also improves coverage: with both layers, 359 instances are solved on average; disabling Layer 1 reduces coverage to 140, and disabling only its single-robot temporal check (keeping the geometric one) yields 182.
To further assess the value of refinements, we evaluated a sequential pipeline: first solve scheduling in a motion-agnostic way, then invoke motion planning once, without refinement if it fails. In our setup, such sequential pipeline cannot solve any problem. Although some instances require no geometric refinements at the single-action level, there is always at least one temporal refinement. This is not due to unrealistic duration estimates: we use a standard symmetric trapezoidal velocity profile, but it ignores multi-agent interactions, necessitating additional temporal refinements.

Focusing on performance, Aries with fluents performs best, solving 87.7 instances. In logistics (see Figure~\ref{fig:samp_example}), at least one instance is solved with 1 robot and 8 items, 2 robots and 8 items, and 3 robots and 7 items; in JSP, instances are solved with 1–3 robots, up to 2 machines, and 3 pallets. Using fluents consistently improves performance, showing that richer state representations better guide refinements. Solvers generally handle more instances without makespan minimization, reflecting the added complexity of optimization. In Aries-Opt with fluents, the planning time spent on motion planning drops to 78\%, compared to 87\% in the non-optimal version, indicating more effort devoted to optimization. Without fluents, CPSE outperforms Aries (124.3 vs. 92.0 total instances solved, opt and non-opt), suggesting CPSE is more effective in this configuration. 
A final observation concerns refinements under different door settings. In the logistics domain, Aries-Opt without fluents generates more refinements when doors are open (ALL-DO) than when closed (ALL-DC). With open doors, fewer geometric bottlenecks and ordering constraints exist: the planner generates highly parallel schedules early in the search. Although symbolically consistent, these schedules can cause spatio-temporal conflicts at the motion-planning level (e.g., corridor congestion), requiring additional temporal refinements. When doors are closed, door-opening activities introduce explicit ordering and synchronization constraints that restrict parallelism and reduce invalid combinations.

\section{Conclusion and Future Work}
This paper defines the SAMP problem and presents a framework that solves it by interleaving off-the-shelf schedulers with (instrumented) motion planners, guided by incremental learning-based motion abstractions. The scheduler proposes candidate plans, and the motion planner checks feasibility, returning symbolic constraints to refine spatial and temporal decisions when needed.
Experiments on scheduling benchmarks with navigation tasks, testing various scheduling strategies (optimal, non-optimal, with/without fluents) and planners, show the framework’s effectiveness in handling multiple synchronized agents, coordinated stop-and-go behaviors, and complex spatio-temporal constraints.

In future work, we plan to extend our framework to support MAPF in addition to motion planning. Once we understand how to formulate this new problem and generate refinements, we will layer scheduling on top of MAPF to obtain a MAPF-aware scheduler, enabling the framework to tackle this problem as well and bridging continuous and discrete reasoning.

\section*{Acknowledgments}
This work has been partially supported by the AI4Work project funded by the EU Horizon 2020 research and innovation program under GA n. 101135990, the STEP-RL project funded by the European Research Council under GA n. 101115870, and by the Interconnected Nord-Est Innovation Ecosystem (iNEST) funded by the European Union Next-GenerationEU (Piano Nazionale di Ripresa e Resilienza (PNRR) – mission 4 component 2, investment 1.5 – D.D. 1058 23/06/2022, ECS00000043).

The work of Arthur Bit-Monnot has been supported by the HumFleet project ANR-23-CE33-0003 
and benefited from the AI Interdisciplinary Institute ANITI. ANITI is funded by the France 2030 program under the Grant agreement n°ANR-23-IACL-0002.

\bibliography{aaai2026}

\ifextendedversion
\clearpage
    \section{Appendix}
    This appendix contains the formal proofs of the guarantees (\textit{soundness}, \textit{completeness}, and \textit{optimality}) underlying the proposed framework, providing theoretical support for the properties discussed in the main text.

\subsection{Soundness}
The goal is to prove that if our framework returns a solution, then this solution is guaranteed to be correct, meaning that all scheduling constraints are satisfied, and the motion planner has found valid trajectories for all motion activities.

\begin{theorem}[Soundness]\label{thm:soundness}
    Let $\psi$ be a SAMP problem, if $\textsc{Solve}(\psi, opt, t_p, timeout)$ produces a solution $\pi$, then $\pi$ is valid.
\end{theorem}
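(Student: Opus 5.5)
The proof traces back from the single line of Algorithm~\ref{alg1:meta-engine} that can return a SAMP schedule, line~\ref{alg1:plan}. The only other exits return \texttt{UNSOLVABLE} or \texttt{INCOMPLETE}. So if $\pi=\langle\rho,\tau\rangle$ is returned, $\rho$ was produced at line~\ref{alg1:get-schedule} with status \texttt{VALID} or \texttt{OPTIMAL}. Also, every parallel motion group $\mathcal{G}\in\mathcal{P}(\rho)$ passed line~\ref{alg1:get-motion-or-refine} with $\tau(\mathcal{G})\neq\emptyset$, because any failure jumps to line 20 and skips the return. We assume throughout that the off-the-shelf scheduler is correct, so that a schedule reported valid for $\psi'$ really satisfies Definition~\ref{def:schedule-validity} for $\psi'$, and that the motion planner is sound, so that any trajectory it reports for $\mathcal{C}_\mathcal{G}$ from \textit{conf} is collision-free and dynamically feasible under each $u_o$. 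The goal is to check the OS part and then the three conditions of Definition~\ref{def:samp-schedule-validity}.

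For the OS part, we argue that $\psi'$ only \emph{adds} constraints to $\phi$. The initial $\psi'$ is $\phi$ plus continuity constraints, and geometric and temporal refinements are implications conjoined to $\mathcal{C}_t$. Auxiliary activities $b'$ can only be present when $\textsc{ChConf}$ forces them, and they are dropped when projecting $\rho$ to the activities of $\phi$. Hence the restriction of $\rho$ to $\mathcal{A}$ is valid for $\phi$. Non-conflict also holds: movable objects are unary resources with $\gamma^a_{o_a}=1$, so two activities moving the same object never overlap.

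For the motion conditions, we partition the time axis. Maximal groups are pairwise non-overlapping, and they are processed in chronological order. Outside all groups, every object sits still at its last goal configuration, by the cases of $\zeta_\pi$. Within a group $\mathcal{G}$, the trajectories returned by \textit{get-motion} were checked at line~\ref{alg3:check-timings}: $\overline d_a+\overline\delta_a\le d_a+\delta_a$. The object therefore reaches $q^G_a$ no later than $e(a)$ and can pause until $e(a)$, using the pausing assumption. Thus each $\tau(a)$ fits in $[s(a),e(a)]$, and collision-freedom and dynamic feasibility (conditions 1 and 2) follow from planner soundness applied jointly to the whole group. Objects not moved in $\mathcal{G}$ are static obstacles at \textit{conf}, which the planner accounts for. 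For condition 3, we argue by induction over the groups that \textit{conf}$(o)$ at the start of each group equals $\zeta_\pi(o,s_{min})$. This holds initially by line~\ref{alg1:set-config} and is preserved by \textit{update} at line~\ref{alg1:config-update}. Each $\tau(a)$ starts from the current configuration of $o_a$, so consecutive trajectories of the same object glue together. The continuity constraints added to $\psi'$ ensure that $q^S_a$ matches the previous $q^G$.

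The main obstacle is the within-group timing and collision argument. The planner reasons in relative time from $s_{min}$ using real-valued delays, while $\rho$ uses integer times. Stretching a trajectory by a pause must preserve collision-freedom with the other objects' trajectories, and this is not automatic. I would handle it by requiring that \textit{get-motion} return trajectories already defined on the scheduled windows $[\delta_a, d_a+\delta_a]$, with the pause encoded in the space-time plan. Then collision-checking is done on the padded trajectories, and soundness of ST-RRT* covers them.
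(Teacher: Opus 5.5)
Your proposal is correct and follows the same route as the paper's sketch. A returned $\pi$ can only come from line~16, so scheduler correctness gives OS validity, non-conflict (movable objects are unary resources) and condition~3 (the continuity constraints), while planner success on every parallel motion group gives conditions~1 and~2. You add detail the paper glosses over, namely projecting away auxiliary activities, the time partition by groups, the induction on \textit{conf}, and the pausing assumption that the paper states only informally.
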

\begin{proof}
Sketch. Let $\pi = \langle p, s, e, \tau \rangle$ and let $\rho = \langle p, s, e \rangle$. We need to prove that $\pi$ is non-conflicting and satisfies the condition of Definition 9.

As noted in the main paper, we model movable objects as unary resources, so since $\rho$ is a solution for the OS problem $\phi$ (because it is generated by the scheduler on the OS problem itself), it follows that $\pi$ is non-conflicting. Condition 3 of Definition 9 is satisfied by the scheduling problem posted to the scheduler, while conditions 1 and 2 are ensured because Algorithm 1 only returns at line 16, where the motion planner has successfully found a valid trajectory for all parallel motion groups.
\end{proof}

\subsection{Completeness}
The goal is to prove that our framework returns a solution whenever one exists, assuming the motion planner is complete and produces time-optimal solutions. This proof relies on showing that the learned constraints always cut the current candidate schedule from the solution space of the scheduler, but never cut valid solutions.

\begin{lemma}[Geometric Progression]\label{thm:geometric-progression}
    Suppose a geometric refinement is derived out of a candidate schedule $\rho$ for the SAMP problem $\psi$ with parallel motion group $\mathcal{G}$, introducing conflicts $\Sigma$ and $\Omega$ and yielding a refined SAMP problem $\psi'$. Then, $\rho$ is no longer a candidate schedule for $\psi'$.
\end{lemma}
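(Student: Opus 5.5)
The approach is to show that $\rho$ satisfies the antecedent $\textsc{RCond}(\mathcal{G})$ of the newly added constraint but falsifies its consequent $\bigvee_{b \in \mathcal{G}, o \in \omega_b} \textsc{ChConf}(b, \textit{conf}(o))$. Since the implication is added as a hard constraint of $\psi'$, it follows that $\rho$ violates $\psi'$ and is no longer a candidate. For the fluent encoding, "$\rho$" means $\rho$ together with the absence of all auxiliary activities $b'$, since they did not exist in $\psi$. I would argue that no extension of $\rho$ to the new activities can help either.

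\textbf{Step 1: $\rho \models \textsc{RCond}(\mathcal{G})$.} The first conjunct holds because $\mathcal{G} \in \mathcal{P}(\rho)$ consists of present activities. For the second conjunct, all activities in $\mathcal{G}_o$ move the same object $o$, a unary resource ($\lambda_o = 1$, $\gamma_o^a = 1$). Validity of $\rho$ for the OS problem (condition 2 of Definition~\ref{def:schedule-validity}) therefore forces them to be pairwise non-overlapping, so $a^o_{min}.\text{end} \le a.\text{start}$ for every other $a \in \mathcal{G}_o$. One delicate point is the closed-interval semantics: overlap at a single shared endpoint is forbidden, so the inequality is in fact strict. For the third conjunct, take any rival $r \in \rivals$ that is present. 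By maximality of $\mathcal{G}$ as a group of motion-parallel activities, $r$ is not parallel with any $a \in \mathcal{G}$. Because $\mathcal{G}$ is a maximal connected cluster of overlapping intervals, its span is a single interval $[\min s, \max e]$, and $r$ lies entirely before or entirely after it. This gives $\neg\textit{overlaps}(r,\mathcal{G})$. The main obstacle is this step: one must argue that the groups partition the motion activities into disjoint time spans. "Maximal set of motion-parallel activities" should be read as a connected component of the overlap graph, and a rival lying in the gap between two members of $\mathcal{G}$ would contradict connectivity.

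\textbf{Step 2: $\rho \not\models \textsc{ChConf}(b,\textit{conf}(o))$ for every $b \in \mathcal{G}$ and $o \in \omega_b$.} Here $\textit{conf}(o)$ is, by line~\ref{alg1:set-config} and the updates at line~\ref{alg1:config-update}, exactly the configuration of $o$ under $\rho$ at the start of $\mathcal{G}$. Equivalently, it is $\tau(a_k)(e(a_k))$ for the last activity $a_k$ moving $o$ before $\mathcal{G}$, or $i(o)$ if there is none.

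In the fluent case, the auxiliary $b'$ requires $f_o \neq \textit{conf}(o)$ over $[s(b), e(b)]$. However, the continuity conditions posted to the scheduler make $\xi_\rho(f_o,\cdot) = \textit{conf}(o)$ at $s(b)$, since Step 1 shows no rival moving $o$ overlaps $\mathcal{G}$. Hence $b'$ cannot be present consistently with $\rho$.

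In the fluent-free case, I would do a case split on whether some present deleter $x \in \mathcal{X}$ ends before $s(b)$.
\begin{itemize}
\item If none does, then $o$ keeps its initial configuration at $s(b)$, so $\textit{conf}(o) = i(o)$ and $\textsc{Del}$ is False. Moreover, any present helper $h$ with $h.\text{end} < b.\text{start}$ would have moved $o$ to a configuration different from $\textit{conf}(o)$, contradicting the value of $\textit{conf}(o)$ recorded by the algorithm. So the helper disjunct also fails.
\item If some deleter does end before $s(b)$, $\textsc{Del}$ fails. For each helper $h$ present before $b$, the last activity moving $o$ before $\mathcal{G}$ ends in $\textit{conf}(o)$, so it is a deleter $x$ that starts after $h$ and ends before $b$. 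This falsifies the inner conjunction for $h$.
\end{itemize}

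Combining Steps 1 and 2, $\rho$ violates the added implication. Any scheduler output for $\psi'$ must satisfy it, so $\rho$ cannot be returned again.
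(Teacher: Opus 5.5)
Your route is the paper's: show $\rho\models\textsc{RCond}(\mathcal{G})$, then show that every $\textsc{ChConf}(b,\textit{conf}(o))$ is false under $\rho$. Your Step~1 is a correct and more careful version of the paper's one-line justification. You use unary resources for the ordering conjunct and the connected-component reading of maximality for the rivals.

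The gap is in Step~2. Both your fluent and fluent-free arguments only consider activities moving $o$ that lie \emph{outside} $\mathcal{G}$. Nothing prevents a blocking obstacle in $\omega_b$ from being moved \emph{inside} $\mathcal{G}$ before $b$ starts. Groups are connected components rather than cliques, the very fact you rely on in Step~1. So you can have $\mathcal{G}=\{a,c,b\}$ with $a$ overlapping only $c$, $c$ overlapping $b$, and $e(a)<s(b)$. Robot $o_a$ parks at $q^G_a$, and the sequential space-time planner then records a collision of $o_b$ with it, so $o_a\in\omega_b$.

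Now $\textit{conf}(o_a)$ is the configuration at the start of $\mathcal{G}$ (here $q^S_a$), not at $s(b)$. Suppose $q^G_a\neq\textit{conf}(o_a)$. Then $a$ itself is a present helper with $a.\text{end}<b.\text{start}$. Every present deleter is a rival lying entirely before $\mathcal{G}$ (so $x.\text{end}<a.\text{start}$) or entirely after it (so $x.\text{start}>b.\text{end}$). The helper disjunct of $\textsc{ChConf}(b,\textit{conf}(o_a))$ is therefore \emph{true} under $\rho$. In the fluent encoding, likewise, $f_{o_a}=q^G_a\neq\textit{conf}(o_a)$ throughout $[s(b),e(b)]$, so $b'$ can be added consistently. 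Two of your claims fail for $h=a$: the sentence ``contradicting the value of $\textit{conf}(o)$ recorded by the algorithm'', and the claim that the last pre-$\mathcal{G}$ deleter ``starts after $h$''. In this situation $\rho$ satisfies the new constraint and is not pruned.

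The paper's own sketch says ``the configuration of every $o$ only depends on $\rho$, and thus all $\textsc{ChConf}(b,\textit{conf}(o))$ are false''. It passes over exactly this case, so this is a hole you inherited rather than a deviation from the paper. To close it you need an explicit extra hypothesis, for instance that no object in $\omega_b$ is moved by any activity of $\mathcal{G}$ (as for doors moved only by rivals). Under that hypothesis, every activity moving $o\in\omega_b$ is a rival lying entirely before or after $\mathcal{G}$. Hence $f_o\equiv\textit{conf}(o)$ on the span of $\mathcal{G}$. The last activity moving $o$ before $\mathcal{G}$ is then a deleter, distinct from and later than any helper $h$ ending before $s(b)$, and your case analysis goes through as written.
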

\begin{proof}
Sketch. The geometric refinement added to $\psi'$ is the constraint:
$$
\small
\textsc{RCond}(\mathcal{G}) \! \rightarrow \!\!\!\!\!\! \bigvee_{b \in \mathcal{G}, o \in \omega_b} \!\!\!\!\!\! \textsc{ChConf}(b, \textit{conf}(o))
$$
To show that $\rho$ is not a candidate schedule for $\psi'$, it suffices to show that $\rho$ violates this constraint. Clearly, $\textsc{RCond}(\mathcal{G})$ is satisfied by $\rho$: all activities in $\mathcal{G}$ exist in $\rho$, the order of activities is the same (as it only depends on $\rho$) and rivals are either before or after the activities in $\mathcal{G}$, otherwise they would have been part of $\mathcal{G}$ in the first place.

Instead, the formula $\bigvee_{b \in \mathcal{G}, o \in \omega_b} \textsc{ChConf}(b, \textit{conf}(o))$ is violated because the configuration of every $o \in \mathcal{O}$ only depends on $\rho$, and thus all $\textsc{ChConf}(b, \textit{conf}(o))$ are false for any $b$ and any $o$.
\end{proof}

\begin{lemma}[Temporal Progression]\label{thm:temporal-progression}
Suppose a temporal refinement is derived out of a candidate schedule $\rho$ for the SAMP problem $\psi$ with parallel motion group $\mathcal{G}$, resulting in a new SAMP problem $\psi'$. Then $\rho$ is not a candidate schedule for $\psi'$. 
\end{lemma}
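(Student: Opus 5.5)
We follow the same pattern as the proof of Lemma~\ref{thm:geometric-progression}: it suffices to show that $\rho$ satisfies the premise and violates the conclusion of the added constraint
$$
\small
\textsc{RCond}(\mathcal{G}) \rightarrow \textsc{ChTime}(\overline{\mathcal{G}}).
$$
Since every other constraint of $\psi'$ is inherited from $\psi$, violating this single implication is enough to exclude $\rho$ from the candidate schedules of $\psi'$.

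\textbf{Premise.} We argue that $\rho$ satisfies $\textsc{RCond}(\mathcal{G})$ exactly as in Lemma~\ref{thm:geometric-progression}.
\begin{itemize}
\item All activities of $\mathcal{G}$ are present in $\rho$, because $\mathcal{G}\in\mathcal{P}(\rho)$.
\item For each object $o$, $\rho$ is non-conflicting and $o$ is a unary resource. Hence the activities of $\mathcal{G}_o$ do not overlap, so $a^o_{min}$ ends before any other $a\in\mathcal{G}_o$ starts.
\item Every rival $r\in\rivals$ is entirely before or after $\mathcal{G}$ in $\rho$. Otherwise $r$ would be motion-parallel with some member of $\mathcal{G}$, contradicting the maximality of $\mathcal{G}$.
\end{itemize}

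\textbf{Conclusion.} We show that each of the three disjunct families of $\textsc{ChTime}(\overline{\mathcal{G}})$ is false under $\rho$.
\begin{enumerate}
\item The $\textsc{ChConf}(a,\textit{conf}(o))$ disjuncts are false for the same reason as in Lemma~\ref{thm:geometric-progression}. The value $\textit{conf}(o)$ is the configuration $o$ actually has at the start of $\mathcal{G}$ under $\rho$, computed by the algorithm from $\rho$ itself. So $\rho$ provides no helper/deleter pattern, and no change of the fluent $f_o$, that moves $o$ away from $\textit{conf}(o)$ before $a$.
\item For the disjuncts $a.\text{start}-\min_{b\in\overline{\mathcal{G}}} b.\text{start} < \delta_a$, we evaluate under $\rho$. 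Here $\delta_a = s(a)-s_{min}$ and $\min_{b\in\overline{\mathcal{G}}} s(b) \ge s_{min}$, so the left side is at most $\delta_a$.
\item For the remaining disjuncts, the index set contains exactly those $a$ with $d_a+\delta_a < \overline{\delta}_a+\overline{d}_a$. For these, $e(a)-s_{min} = d_a+\delta_a < \overline{\delta}_a+\overline{d}_a$, so the required inequality fails. This set is nonempty precisely because the check at line~\ref{alg3:check-timings} failed, which is the only way a temporal refinement is produced.
\end{enumerate}

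\textbf{Main obstacle.} The delicate part is the offset mismatch: $\delta_a$ and $\overline{\delta}_a$ are measured from $s_{min}$ of $\mathcal{G}$, while $\textsc{ChTime}$ is written relative to $\min_{b\in\overline{\mathcal{G}}} b.\text{start}$. If $\overline{\mathcal{G}}\subsetneq\mathcal{G}$ does not contain the earliest activity, the second family only yields a quantity $\le\delta_a$, not necessarily $\ge\delta_a$. This direction is harmless for falsifying item 2 only when combined with the non-strict bound, so it must be checked carefully. Item 3 also becomes harder to falsify, since $e(a)-\min_{\overline{\mathcal{G}}} s(b)$ could exceed $e(a)-s_{min}$.

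I would first show that \textit{get-motion} always reports $\overline{\mathcal{G}}$ containing an activity starting at $s_{min}$; this holds because the planner plans sequentially from the group's start, so the offsets coincide. Failing that, I would read $\delta_a$, $\overline{\delta}_a$ and $d_a$ as recomputed relative to $\min_{b\in\overline{\mathcal{G}}} s(b)$, as in the worked example of Figure~\ref{fig:temporal-info}, where the reference is $s(a)$. With aligned offsets, the second family gives exact equality $\delta_a$, so the strict inequality fails, and the third family fails by the timing violation. Hence $\rho$ violates the refinement, and $\rho$ is not a candidate schedule for $\psi'$.
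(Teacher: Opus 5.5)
Your proposal follows the paper's proof: it checks $\textsc{RCond}(\mathcal{G})$ as in the geometric case and then falsifies each family of $\textsc{ChTime}(\overline{\mathcal{G}})$, taking $s(a)-\min_{b\in\overline{\mathcal{G}}}s(b)=\delta_a$ and $e(a)-\min_{b\in\overline{\mathcal{G}}}s(b)=d_a+\delta_a$ ``by definition'', which is exactly your aligned-offset reading. Your other route, that \textit{get-motion} always returns a $\overline{\mathcal{G}}$ containing an activity starting at $s_{min}$, is not guaranteed by the abstract algorithm, so it is this convention that carries the argument.

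One correction to your obstacle paragraph: $\overline{\mathcal{G}}\subseteq\mathcal{G}$ gives $\min_{b\in\overline{\mathcal{G}}}s(b)\ge s_{min}$, hence $e(a)-\min_{b\in\overline{\mathcal{G}}}s(b)\le d_a+\delta_a$, so the third family is false under either reading (and its index set need not be nonempty). Only the second family depends on the alignment, and there it is genuinely needed: under the literal reading, every disjunct is true whenever $\min_{b\in\overline{\mathcal{G}}}s(b)>s_{min}$.
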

\begin{proof}
Sketch. The temporal refinement added to $\psi'$ is the constraint:
\begin{align*}
\small
\textsc{RCond}(\mathcal{G}) \rightarrow \textsc{ChTime}(\overline{\mathcal{G}})
\end{align*}
with $\overline{\mathcal{G}} \subseteq \mathcal{G}$ and $\textsc{ChTime}(\overline{\mathcal{G}})$ equals to
\begin{equation*}
\small
\begin{split}
\bigvee_{\substack{a\in\mathcal{G}\\ o \in \mathcal{O}}} \!\! \textsc{ChConf}(a, \textit{conf}(o)) \vee
\bigvee_{a \in \overline{\mathcal{G}}} a.\text{start} - \min_{b\in \overline{\mathcal{G}}} (b.\text{start}) < \delta_a \vee \\
\bigvee_{a\in \overline{\mathcal{G}} | (d_a+\delta_a) < (\overline{\delta}_a + \overline{d}_a)} (a.\text{end} - \min_{b\in \overline{\mathcal{G}}} (b.\text{start})) \geq \overline{\delta}_a + \overline{d}_a
\end{split}
\end{equation*}
To show that $\rho$ is not a candidate schedule for $\psi'$, it suffices to show that $\rho$ violates this constraint. 
$\textsc{RCond}(\mathcal{G})$ is satisfied by $\rho$ as in the previous proof.

The formula $\bigvee_{a\in\mathcal{G}, o \in \mathcal{O}} \textsc{ChConf}(a, \textit{conf}(o))$ is violated because the configuration of each $o \in \mathcal{O}$ only depends on $\rho$, and thus all $\textsc{ChConf}(b, \textit{conf}(o))$ are false for any $b$ and $o$.

The formula $\bigvee_{a \in \overline{\mathcal{G}}} a.\text{start} - \min_{b\in \overline{\mathcal{G}}} (b.\text{start}) < \delta_a$ is violated because in $\rho$, for any $a \in \overline{\mathcal{G}}$, $s(a) - \min_{b\in \overline{\mathcal{G}}} (s(b)) = \delta_a$ by definition.

$\bigvee_{a\in \overline{\mathcal{G}} | (d_a+\delta_a) < (\overline{\delta}_a + \overline{d}_a)} (a.\text{end} - \min_{b\in \overline{\mathcal{G}}} (b.\text{start})) \geq \overline{\delta}_a + \overline{d}_a$ is violated because each activity $a$ is such that $e(a) - \min_{b\in \overline{\mathcal{G}}} (s(b)) = d_a + \delta_a$ by definition. Hence, all disjuncts are trivially false.
\end{proof}

\begin{lemma}[Geometric pruning soundness]\label{thm:geometric-pruning}
Suppose a geometric refinement is derived out of a candidate schedule $\rho$ for $\psi$ with parallel motion group $\mathcal{G}$ and conflict $\Sigma$ and $\Omega$, resulting in SAMP $\psi'$. Any SAMP solution $\pi$ of $\psi$ is a SAMP solution for $\psi'$. 
\end{lemma}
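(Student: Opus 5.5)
The refined problem $\psi'$ differs from $\psi$ only by the added constraint
$$\textsc{RCond}(\mathcal{G}) \rightarrow \bigvee_{b \in \mathcal{G}, o \in \omega_b} \textsc{ChConf}(b, \textit{conf}(o))$$
(plus, in the fluent encoding, the auxiliary activities $b'$, which are optional and unconstrained except through this formula). So it suffices to show that every valid SAMP solution $\pi = \langle p, s, e, \tau\rangle$ of $\psi$ either falsifies $\textsc{RCond}(\mathcal{G})$ or satisfies some $\textsc{ChConf}(b, \textit{conf}(o))$. In the fluent case, the auxiliary $b'$ are set absent whenever $\textsc{ChConf}$ is not needed, and present with the timing of $b$ when it is.

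I would argue by contradiction. Assume $\pi$ satisfies $\textsc{RCond}(\mathcal{G})$ and falsifies every $\textsc{ChConf}(b, \textit{conf}(o))$ for $b \in \mathcal{G}$ and $o \in \omega_b$.

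\textbf{Step 1: isolation of $\mathcal{G}$.} By $\textsc{RCond}$, all activities of $\mathcal{G}$ are present in $\pi$. Activities moving the same object are ordered as in $\rho$. No rival motion activity overlaps the span of $\mathcal{G}$. Hence, during that span, only the activities of $\mathcal{G}$ move objects. Every other object $o$ is stationary at the configuration reached by its last motion before $\mathcal{G}$, or at $i(o)$ if there is none.

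\textbf{Step 2: obstacle configurations coincide with $\textit{conf}$.} I would show that the negation of $\textsc{ChConf}(b, \textit{conf}(o))$ forces $\zeta_\pi(o,t) = \textit{conf}(o)$ for every blocking obstacle $o \in \omega_b$ and every $t$ within $b$.
\begin{itemize}
\item In the fluent encoding, $b'$ absent (or its precondition failing) means that $f_o$ equals $\textit{conf}(o)$ during $b$.
\item In the fluent-free encoding, the falsity of $\textsc{Del}$ and of every helper disjunct means one of two things. Either a deleter $x$ placing $o$ in $\textit{conf}(o)$ occurs before $b$ with no helper after it, or no helper precedes $b$ and $i(o) = \textit{conf}(o)$. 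In both cases $o$ sits at $\textit{conf}(o)$ when $b$ runs.
\end{itemize}

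\textbf{Step 3: contradiction with the planner's verdict.} The motion planning query issued on $\rho$ for $\mathcal{G}$ used start configurations $\textit{conf}$ and the same motion constraints $mc(a)$. By Steps 1--2, the sub-problem that $\pi$ solves on the span of $\mathcal{G}$ is the same query: same moving activities, same goal configurations, and the blocking obstacles in the same places. Non-blocking objects do not affect the failure certificate, since $\omega_b$ collects every collision encountered. The trajectories $\tau$ of $\pi$, restricted to this span (conditions 1--3 of Definition~\ref{def:samp-schedule-validity}), would then be a feasible solution of a query that a complete motion planner declared infeasible, or for which it declared $\sigma_b$ unreachable from $q^S_b$. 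This is a contradiction, so $\pi$ satisfies the refinement and is a solution of $\psi'$.

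\textbf{Main obstacle.} Step 3 is the delicate point. Timing in $\pi$ may differ from $\rho$, and for a geometric refinement with non-singleton $\mathcal{G}$ the inputs must coincide exactly. I would handle the two cases separately.
\begin{itemize}
\item For singleton groups (path finding), timing is irrelevant. Reachability depends only on the static obstacles, which are fixed by Steps 1--2.
\item For groups, the refinement arises only when the planner finds no trajectory at all, under completeness and the pausing assumption. Since any relative timing can be emulated by waiting, a $\pi$ with different timing would still yield some trajectory, contradicting the verdict.
\end{itemize}
The equivalent-object generalization of the refinement would be handled the same way: equivalent objects in the same reachability area have identical reachable sets.
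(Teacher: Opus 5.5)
Your overall structure matches the paper's proof: contradiction, isolation of $\mathcal{G}$ via \textsc{RCond}, blocking obstacles pinned to $\textit{conf}$, then contradiction with the planner's verdict. However, Step~1 misreads \textsc{RCond}. Its middle conjunct only states $a^o_{min}.\text{end} \le a.\text{start}$ for $a \in \mathcal{G}_o\setminus\{a^o_{min}\}$, so the \emph{first} activity moving each object stays first and nothing more is fixed.

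When an object has several activities in $\mathcal{G}$, a solution $\pi$ may run the remaining ones in a different order than $\rho$ while still respecting continuity. An example is two round trips out of $q^G_{a^o_{min}}$ executed in swapped order. The sub-problem that $\pi$ solves on the span of $\mathcal{G}$ is then not the query issued on $\rho$. Your Step~3 claim of \emph{same moving activities, same goal configurations} therefore does not yet yield a contradiction. Your waiting argument cannot repair this either: pausing (with a uniform shift) emulates different relative timings, but it cannot permute the waypoints of a single object.

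The missing ingredient is the step the paper adds at exactly this point. Geometric refinements disregard timing, and continuity forces each object's activities to chain. Hence existence of paths for the group depends only on the start configuration of each moved object, which the second conjunct of \textsc{RCond} guarantees. It also depends on the set of waypoints each object must reach. It does not depend on the order in which one object visits them, without constraining the interleaving between different objects. This is what lets the planner's infeasibility verdict on $\rho$'s order transfer to any order $\pi$ can realize.

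With that step added, the rest of your argument goes through. It is in fact more explicit than the paper's about the two \textsc{ChConf} encodings and about timing. One small fix in the fluent case: the absence of $b'$ by itself says nothing about $f_o$. You should extend $\pi$ by making $b'$ present, with $b$'s timing, exactly when its precondition $f_o \neq \textit{conf}(o)$ holds under $\pi$. This is harmless since $b'$ has no effects. Then falsity of every \textsc{ChConf} genuinely means every such precondition fails.
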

\begin{proof}
Sketch. Suppose, for the sake of contradiction, that there exists a solution $\pi = \langle p, s, e, \tau \rangle$ for $\psi$ which is not a solution for $\psi'$. Let $\rho'$ be the OS schedule of $\pi$. Since the only difference between $\psi$ and $\psi'$ is the constraint
$$
\small
\textsc{RCond}(\mathcal{G}) \! \rightarrow \!\!\!\!\!\! \bigvee_{b \in \mathcal{G}, o \in \omega_b} \!\!\!\!\!\! \textsc{ChConf}(b, \textit{conf}(o))
$$
then, $\pi$ must violate this constraint. 

To violate this constraint, $\pi$ must satisfy $\textsc{RCond}(\mathcal{G})$ and violate $\bigvee_{b \in \mathcal{G}, o \in \omega_b} \textsc{ChConf}(b, \textit{conf}(o))$. 

Hence, $\pi$ has the set of activities $\mathcal{G}$, all present and such that any other motion activity is either before the start of the first activity in $\mathcal{G}$ or after the last end. This is because of the first and third conjuncts of $\textsc{RCond}(\mathcal{G})$.

Note that \textit{conf(o)} in $\rho$ must be equal to \textit{conf(o)} in $\rho'$ for all $b \in \mathcal{G}$ and $o \in \omega_b$.
Moreover, $\textit{conf}(o_a) = \textit{conf}'(o_a)$ for all $a \in \mathcal{G}$, because of the second conjunct of $\textsc{RCond}(\mathcal{G})$: the first motion action for any moved object is kept, therefore every movable object moved by $\mathcal{G}$ is initially in the same configuration.
Thus, all relevant obstacles (we assume that the motion planner is complete, therefore all obstacles that can be encountered by any movable object are returned) and movable objects are in the same configurations in $\rho$ and $\rho'$. 

Now, consider the paths for motion activities in $\mathcal{G}$ that must exist for $\pi$ to be a solution. Observe that, changing the order of the activities (after the first one), moving $o_a$ cannot change the motion planner verdict on the feasibility of the combined motion of the activities in $\mathcal{G}$. This is because in geometric refinements we are not considering the timings and the problem constraints require the sequence of activities moving $o_a$ to be such that the ending configuration of one activity is the initial configuration of the following one. The only thing that matters for the path existence is whether the paths are geometrically realizable, and this is unaffected by changing the order of waypoints to be reached by one movable object (but without constraining the order of waypoints between different movable objects).

However, we know that the motion planner instantiated on the candidate schedule $\rho$ deemed the problem unsolvable. This leads to a contradiction, as $\pi$ cannot be a solution for $\psi$ under these conditions.
\end{proof}

\begin{lemma}[Temporal pruning soundness]\label{thm:temporal-pruning}
Suppose a temporal refinement is derived out of a candidate schedule $\rho$ for $\psi$ with parallel motion group $\mathcal{G}$, resulting in a new SAMP $\psi'$. Any SAMP solution $\pi$ of $\psi$ is a SAMP solution for $\psi'$. 
\end{lemma}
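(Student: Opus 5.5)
We argue by contradiction, mirroring the proof of Lemma~\ref{thm:geometric-pruning}. Suppose $\pi = \langle p, s, e, \tau\rangle$ is a SAMP solution of $\psi$ that is not a solution of $\psi'$, and let $\rho'$ be its OS schedule. Since $\psi'$ differs from $\psi$ only by the constraint $\textsc{RCond}(\mathcal{G}) \rightarrow \textsc{ChTime}(\overline{\mathcal{G}})$, $\rho'$ must satisfy $\textsc{RCond}(\mathcal{G})$ and falsify every disjunct of $\textsc{ChTime}(\overline{\mathcal{G}})$. We then show that such a $\pi$ would contradict the verdict of the (assumed complete and time-optimal) motion planner on the candidate $\rho$.

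\textbf{Step 1: same motion planning instance.} From $\textsc{RCond}(\mathcal{G})$, all activities of $\mathcal{G}$ are present in $\rho'$, and every rival lies entirely before or after $\mathcal{G}$. Together with the ordering conjunct (the first activity moving each $o$ is preserved), this gives that each object moved in $\mathcal{G}$ starts the group from the same configuration as in $\rho$. The first disjunct of $\textsc{ChTime}$ is false, so no $\textsc{ChConf}(a,\textit{conf}(o))$ holds. Hence every other object also occupies $\textit{conf}(o)$ during the group, exactly as in $\rho$, and the static environment seen by the planner for $\mathcal{G}$ is identical in $\rho$ and $\rho'$.

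\textbf{Step 2: timing comparison.} The second disjunct is false, so for every $a \in \overline{\mathcal{G}}$ we have $s'(a) - \min_{b\in\overline{\mathcal{G}}} s'(b) \ge \delta_a$: no activity of the subgroup is anticipated relative to the others. The third disjunct is false, so for each $a$ with $d_a+\delta_a < \overline{\delta}_a+\overline{d}_a$ we get $e'(a) - \min_b s'(b) < \overline{\delta}_a + \overline{d}_a$. The trajectories $\tau$ restricted to $\overline{\mathcal{G}}$ in $\pi$ are collision-free and dynamically feasible by Definition~\ref{def:samp-schedule-validity}. We translate them into the relative frame used by the planner, with start offsets at least $\delta_a$. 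Since objects may always pause, these trajectories can be shifted to a feasible joint motion for $\overline{\mathcal{C}}_{\mathcal{G}}$ with start offsets exactly $\delta_a$, possibly by waiting at the start configuration. In that motion, some activity $a$ with $d_a+\delta_a<\overline{\delta}_a+\overline{d}_a$ completes strictly before $\overline{\delta}_a+\overline{d}_a$.

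\textbf{Step 3: contradiction with time-optimality.} The planner returned $\overline{\delta}_a+\overline{d}_a$ as the minimal completion times for the same instance. A complete, time-optimal planner would therefore have found a trajectory satisfying the scheduled timing, and no temporal refinement would have been emitted. This contradicts the assumption, so $\pi$ is a solution for $\psi'$.

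The main obstacle is Step 2. It requires that time-optimality be understood jointly over $\overline{\mathcal{G}}$, so that the planner's $\overline{\delta}_a+\overline{d}_a$ lower-bound the completion of \emph{some} violated activity in any feasible joint motion with offsets at least $\delta_a$. It also requires justifying that delaying starts cannot help, which we would argue via the pausing assumption. We would state this interpretation of optimality explicitly as the hypothesis under which the lemma holds.
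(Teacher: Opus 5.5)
Your proposal follows the paper's proof essentially step for step: assume a solution of $\psi$ that violates $\textsc{RCond}(\mathcal{G}) \rightarrow \textsc{ChTime}(\overline{\mathcal{G}})$, use $\textsc{RCond}$ and the falsified $\textsc{ChConf}$ disjuncts to recover the same configurations, use the negated timing disjuncts and the pausing assumption to turn $\tau$ into a strictly faster witness for the planner's instance, and contradict the planner's completeness and time-optimality. You are actually more careful than the paper, which asserts $e(a)-\min_{b\in\overline{\mathcal{G}}}s(b) < \overline{\delta}_a+\overline{d}_a$ for every $a\in\overline{\mathcal{G}}$ even though the negated third disjunct only gives it for the violated activities, so your explicit joint-optimality hypothesis is exactly what is needed; however, drop the claim in Step~3 that the planner ``would have found a trajectory satisfying the scheduled timing'', because the witness only beats $\overline{\delta}_a+\overline{d}_a$ and the contradiction is with the minimality of those values.
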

\begin{proof}
Sketch. We proceed as in the previous lemma. Suppose, for the sake of contradiction, that there exists a solution $\pi = \langle p, s, e, \tau \rangle$ for $\psi$ which is not a solution for $\psi'$. Let $\rho'$ be the OS schedule of $\pi$. Since the only difference between $\psi$ and $\psi'$ is the constraint
\begin{align*}
\small
\textsc{RCond}(\mathcal{G}) \rightarrow \textsc{ChTime}(\overline{\mathcal{G}})
\end{align*}
with $\overline{\mathcal{G}} \subseteq \mathcal{G}$ and $\textsc{ChTime}(\overline{\mathcal{G}})$ equals to
\begin{equation*}
\small
\begin{split}
\bigvee_{\substack{a\in\mathcal{G}\\ o \in \mathcal{O}}} \!\! \textsc{ChConf}(a, \textit{conf}(o)) \vee
\bigvee_{a \in \overline{\mathcal{G}}} a.\text{start} - \min_{b\in \overline{\mathcal{G}}} (b.\text{start}) < \delta_a \vee \\
\bigvee_{a\in \overline{\mathcal{G}} | (d_a+\delta_a) < (\overline{\delta}_a + \overline{d}_a)} (a.\text{end} - \min_{b\in \overline{\mathcal{G}}} (b.\text{start})) \geq \overline{\delta}_a + \overline{d}_a
\end{split}
\end{equation*}
then, $\pi$ must violate this constraint. 

To violate this constraint, $\pi$ must satisfy $\textsc{RCond}(\mathcal{G})$ and violate $\textsc{ChTime}(\overline{\mathcal{G}})$. As before, $\pi$ has the set of activities $\mathcal{G}$, all present and such that all other motion activity is either before the start of the first activity in $\mathcal{G}$ or after the last end. Moreover, \textit{conf(o)} in $\rho$ must be equal to \textit{conf(o)} in $\rho'$ for all $o \in \mathcal{O}$. Additionally, for all $a \in \overline{\mathcal{G}}$ it must hold $s(a) - \min_{b\in \overline{\mathcal{G}}} (s(b)) \ge \delta_a$ and  $e(a) - \min_{b\in \overline{\mathcal{G}}} (s(b)) < \overline{\delta}_a + \overline{d}_a$.

But then consider the trajectories $\tau(a)$ for all $a \in \overline{\mathcal{G}}$, these trajectories are such that each movable object $o_a$ is stationary in the interval $[\min_{b\in \overline{\mathcal{G}}} (s(b)), s(a)]$ and the movement associated with $a$ ends before $e(a)$. Therefore, $\tau$ witnesses a solution for a motion planning problem that is at least as constrained (from the temporal point of view) as the one derived from $\rho$, which is strictly faster than the one used to generate the temporal refinement. But we assumed the motion planner was optimal and complete, hence the contradiction.
\end{proof}

\begin{theorem}[Completeness]\label{thm:completeness}
Let $\psi$ be SAMP problem admitting at least one solution. $\textsc{Solve}(\psi, opt, t_p, \infty)$ eventually returns a solution for $\psi$ assuming a motion planner which is complete and optimal.
\end{theorem}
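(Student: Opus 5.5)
The argument combines the four lemmas in a finite-progress argument. Fix a solution $\pi^* = \langle p^*, s^*, e^*, \tau^*\rangle$ of $\psi$ and let $\rho^*$ be its OS schedule. Since the motion planner is assumed complete, we may treat each planner call as terminating with the correct verdict. This removes false negatives, so the timeout-doubling and reset branch at line~\ref{alg1:double-time} is never needed for correctness. The key invariant is that at every iteration the current refined problem $\psi'$ still admits $\pi^*$ as a solution. Initially $\psi'=\psi$, and the continuity constraints added to the OS are satisfied by every valid SAMP schedule by condition 3 of Definition~\ref{def:samp-schedule-validity}. Each iteration adds either a geometric refinement or a temporal refinement. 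By Lemma~\ref{thm:geometric-pruning} and Lemma~\ref{thm:temporal-pruning}, every SAMP solution of the current problem remains a solution of the refined one, so by induction $\pi^*$ survives all refinements. Consequently the scheduler never reports infeasibility, and the \texttt{UNSOLVABLE} exit at line~\ref{alg1:unsolvable} is never reached.

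Next I would show that every iteration that does not return makes strict progress. If the scheduler returns $\rho$ and some group $\mathcal{G}$ fails at Layer 1 or Layer 2, \textsc{GetMotionOrRefine} adds a refinement. Lemma~\ref{thm:geometric-progression} and Lemma~\ref{thm:temporal-progression} then guarantee that $\rho$ is not a candidate schedule for the new $\psi'$. So the sequence of candidate schedules produced by the scheduler contains no repetitions, since constraints are only added and never removed once resets are excluded.

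To conclude termination, I would argue that the set of candidate schedules is finite. Presence is Boolean, and start and end times are natural numbers. If the scheduler is makespan-minimal, or we add the harmless bound $e(a) \le H$ for some $H$ at least the makespan of $\pi^*$, only finitely many schedules are relevant. A cleaner route is to note that refinements never cut $\pi^*$, so an optimal scheduler always returns a schedule whose makespan is at most that of $\rho^*$, and there are finitely many such schedules. Therefore after finitely many iterations the scheduler returns a schedule that passes every group check, and Algorithm~\ref{alg1:meta-engine} returns at line~\ref{alg1:plan}. Its output is valid by Theorem~\ref{thm:soundness}. Alternatively the scheduler may eventually return $\rho^*$ itself, for which a complete planner finds trajectories, with $\tau^*$ as witness.

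The main obstacle is this finiteness step, since time is unbounded in Definition~\ref{def:schedule} and a non-optimal scheduler could in principle propose ever-later schedules. I would handle it first via the makespan bound argued above. Failing that, I would argue that the refinements depend only on finitely many relative quantities: the offsets $\delta_a$, the required durations $\overline{d}_a+\overline{\delta}_a$, and the configurations in the finite set $\mathcal{Q}_\psi$. Hence only finitely many distinct refinements can ever be generated. Once none is new, the next candidate must pass every check, because a failing check would regenerate a refinement it already violates, contradicting Lemma~\ref{thm:geometric-progression} or Lemma~\ref{thm:temporal-progression}. A secondary subtlety is ensuring the trajectory caching across groups does not invalidate the per-group reasoning; this needs only a remark.
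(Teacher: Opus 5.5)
Your skeleton is the paper's. Lemmas~\ref{thm:geometric-pruning} and~\ref{thm:temporal-pruning} keep $\pi^*$ alive, Lemmas~\ref{thm:geometric-progression} and~\ref{thm:temporal-progression} forbid repeated candidates, and finiteness forces termination. The gap is the paper's fourth ingredient, which you explicitly discard. You assert that completeness lets you treat every planner call as returning the correct verdict, so the doubling/reset branch at line~\ref{alg1:double-time} is never needed. But Algorithm~\ref{alg1:meta-engine} cuts every call off after $t_p$, and completeness is a property of an untruncated run (for sampling-based planners, only a probabilistic one). With the given finite $t_p$, a call on a feasible group can time out. It then emits either a geometric refinement built from an incomplete $\Sigma,\Omega$ or a temporal one built from non-optimal $\overline{d},\overline{\delta}$. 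Such a constraint can exclude $\pi^*$, and the pruning lemmas say nothing about it, because they presuppose exhaustive and optimal planner output. So your invariant that $\pi^*$ survives every refinement is not established for the run actually executed. Neither is your makespan-based finiteness bound, which rests on that invariant. This false-negative scenario is exactly what the restart mechanism is for, and the paper's proof covers it by observing that $t_p$ eventually becomes sufficient.

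The repair is an argument over epochs, i.e.\ maximal runs between resets. The progression lemmas do not depend on planner correctness: they only use that $\rho$ violates the constraint generated from $\rho$. So your refinement-counting route bounds the iterations of every epoch. Within an epoch $t_p$ is fixed, and for a deterministic planner each refinement is a function of the query $(\mathcal{C}_\mathcal{G},\textit{conf})$ and of the scheduled durations $d_a\le ub_a$. There are finitely many queries, since offsets inside a group are bounded by $\sum_a ub_a$ and configurations lie in $\mathcal{Q}_\psi\cup\{i(o)\mid o\in\mathcal{O}\}$. Hence each epoch ends in one of two ways. It may return a schedule, which is valid by Theorem~\ref{thm:soundness}. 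Otherwise the scheduler reports infeasibility at $it>0$, which doubles $t_p$ and resets $\psi'$; the \texttt{UNSOLVABLE} exit needs $it=0$, which is impossible because $\rho^*$ satisfies the initial OS with continuity constraints. Since only finitely many queries can ever arise, after finitely many doublings $t_p$ exceeds what the complete, optimal planner needs on each of them. From that epoch on, your invariant argument goes through verbatim. With this addition your finiteness step is a genuine alternative to the paper's: it replaces the paper's tacit assumption (that the scheduler stays within the horizon given by the sum of maximal durations) with determinism of the planner.
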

\begin{proof}
Sketch. The theorem follows from four observations. First, no candidate schedule is evaluated twice by the algorithm because of Lemmas \ref{thm:geometric-progression} and \ref{thm:temporal-progression}. Second, no solution is cut from the solution space because of Lemmas \ref{thm:geometric-pruning} and \ref{thm:temporal-pruning}. Third, the set of candidate plans for a SAMP problem is finite (even though we formalized time over the natural numbers, there is an obvious time horizon defined by the sum of all the maximal durations ov every activity). Fourth, assuming the motion planner is complete and optimal, eventually we will arrive at a time bound $t_p$ sufficient to construct the trajectories for the solution. Therefore, a solution is eventually found if it exists.
\end{proof}

Finally, note that the approach is \emph{not} a decision procedure, if no solution exists for the SAMP problem the algorithm might diverge.

\subsection{Optimality}

The goal is to prove that our framework is relatively optimal for makespan optimization.

\begin{theorem}[Relative Optimality]
Assuming $opt$ is the function aiming to minimize the makespan of the schedule, a solution $\pi$ returned by $\textsc{Solve}(\psi, opt, t_p, timeout)$ is optimal, assuming the motion planner is complete and optimal w.r.t. the duration of motions.
\end{theorem}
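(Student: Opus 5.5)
I will argue by contradiction, reusing the pruning lemmas. Let $\pi = \langle p, s, e, \tau\rangle$ be the solution returned by $\textsc{Solve}(\psi, opt, t_p, timeout)$. Let $\psi'$ be the refined problem passed to the scheduler in the iteration that produced it, and $\rho$ its OS schedule. By Theorem~\ref{thm:soundness}, $\pi$ is valid for $\psi$. Suppose some valid SAMP schedule $\pi^*$ of $\psi$, with OS schedule $\rho^*$, has strictly smaller makespan, i.e.\ $\textit{opt}(\rho^*) < \textit{opt}(\rho)$. Since makespan depends only on $p, s, e$, the comparison involves the OS schedules alone. Optimality of $\pi$ would then contradict optimality of the scheduler on $\psi'$, provided $\rho^*$ is still a candidate for $\psi'$.

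The first key step is to show that every SAMP solution of $\psi$ remains a solution of $\psi'$. The problem $\psi'$ arises from $\psi$ by a finite sequence of geometric and temporal refinements added since the last reset at line~\ref{alg1:reset-constraints}; a reset only returns to $\psi$ itself. By induction on the number of refinements, Lemma~\ref{thm:geometric-pruning} and Lemma~\ref{thm:temporal-pruning} keep $\pi^*$ a solution at every step. Both lemmas assume a complete and duration-optimal motion planner, which is exactly the hypothesis here. Temporal pruning soundness is where duration-optimality is essential: without it, a faster trajectory for $\pi^*$ could be cut by an overestimated $\overline{\delta}_a + \overline{d}_a$. Hence $\rho^*$ satisfies all constraints of $\psi'$ and is a valid OS schedule for it.

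The second step is to use that the returned $\rho$ came from \textit{get-schedule}$(\psi', \textit{opt})$ with status \texttt{OPTIMAL}. This is the status to rely on when the scheduler is makespan-minimal, as in the paper's informal statement. Definition~\ref{def:schedule-optimality} then gives $\textit{opt}(\rho) \le \textit{opt}(\rho^*)$, contradicting the assumption. Trajectories affect nothing here, since the returned $\tau$ fits within the scheduled intervals (line~\ref{alg3:check-timings}), and objects are assumed able to pause.

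The main obstacle is the first step, specifically justifying the lemmas' hypotheses. The motion planner runs under a finite $t_p$, so it may have produced a spurious refinement that cuts $\rho^*$ (a false negative or a suboptimal duration). I would handle this by reading ``complete and optimal'' as applying to each call that produced a refinement. Any refinement from a timed-out, non-optimal call is therefore excluded by hypothesis. Alternatively, I would note that such refinements are discarded at resets and argue about the final reset epoch. I would also state explicitly that the result holds only for a scheduler that returns makespan-optimal schedules. This gives relative optimality rather than absolute optimality.
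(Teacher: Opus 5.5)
Your proposal is correct and follows essentially the same route as the paper: argue by contradiction, use the geometric and temporal pruning-soundness lemmas so that the cheaper solution survives every refinement, and conclude that the makespan-optimal scheduler could not have returned $\rho$. Your version is slightly tighter than the paper's (which argues that $\rho'$ ``would be encountered before'' $\rho$ and returned instead), because you apply the lemmas inductively over the refinements since the last reset and contradict the scheduler's optimality on the final $\psi'$ directly. One caveat: of your two ways of handling the finite $t_p$, only the first actually works, namely taking completeness and duration-optimality to hold for every call that produced a refinement, which the paper also assumes implicitly. Restricting attention to the last reset epoch does not by itself exclude spurious refinements generated within that epoch.
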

\begin{proof}
For the sake of contradiction, assume there exists a solution $\pi'$ with a makespan smaller than $\pi$. Since we assume that the scheduler is optimal, complete and correct, the candidate schedule $\rho'$ of $\pi'$ would be encountered before the candidate schedule $\rho$ of $\pi$. Because of Lemmas~\ref{thm:geometric-pruning} and \ref{thm:temporal-pruning} we know that no valid solution is discarded, therefore $\pi'$ would be returned instead of $\pi$, leading to the contradiction.
\end{proof}
\fi

\end{document}